\newcommand{\icol}[1]{
  \begin{array}{c}#1\end{array}%
}
\newcommand{\commentout}[1]{}
\newcommand{\abs}[1]{\ensuremath{|#1}|}
\newcommand{\true}{\ensuremath{\mathit{tt}}}
\newcommand{\false}{\ensuremath{\mathit{ff}}}
\newcommand{\modelsprop}{\models}
\newcommand{\var}{\mathit{Var}}
\newcommand{\ap}{\mathcal{P}}
\newcommand{\modelsltl}{\models}
\DeclareMathOperator{\ltluntil}{\mathsf{U}}
\DeclareMathOperator{\ltlnext}{\mathsf{X}}
\DeclareMathOperator{\ltlF}{\mathsf{F}}
\DeclareMathOperator{\ltlG}{\mathsf{G}}
\newcommand{\triggers}{\mapsto}
\newcommand{\modelspsl}{\models}
\newcommand{\matches}{\vdash}
\newcommand{\sematom}[1]{[\![#1]\!]}
\DeclareMathOperator{\remainder}{\%}
\newcommand{\flieLTL}{\texttt{LTL-Infer}}
\newcommand{\fliePSL}{\texttt{Flie-PSL}}
\newtheorem{theorem}{Theorem}
\newtheorem{proposition}{Proposition}
\newtheorem{lemma}{Lemma}
\newtheorem{observation}{Observation}{\bfseries}{\itshape}
{\bfseries}{\itshape}
\newtheorem{corollary}{Corollary}{\bfseries}{\itshape}
\pgfplotsset{compat=1.14}
\title{Learning Interpretable Models in the Property Specification Language}
\author{
Rajarshi Roy$^1$\and
Dana Fisman$^2$\And
Daniel Neider$^1$\\
\affiliations
$^1$Max Planck Institute for Software Systems, Kaiserslautern, Germany\\
$^2$Ben-Gurion University, Be'er Sheva, Israel
}
\begin{document}

\maketitle

\begin{abstract}
We address the problem of learning human-interpretable descriptions of a complex system from a finite set of positive and negative examples of its behavior.
In contrast to most of the recent work in this area, which focuses on descriptions expressed in Linear Temporal Logic (LTL), we develop a learning algorithm for formulas in the IEEE standard temporal logic PSL (Property Specification Language).
Our work is motivated by the fact that many natural properties, such as an event happening at every $n$-th point in time, cannot be expressed in LTL, whereas it is easy to express such properties in PSL.
Moreover, formulas in PSL can be more succinct and easier to interpret (due to the use of regular expressions in PSL formulas) than formulas in LTL.

Our learning algorithm builds on top of an existing algorithm for learning LTL formulas.
Roughly speaking, our algorithm reduces the learning task to a constraint satisfaction problem in propositional logic and then uses a SAT solver to search for a solution in an incremental fashion. We have implemented our algorithm and performed a comparative study between the proposed method and the existing LTL learning algorithm. Our results illustrate the effectiveness of the proposed approach to provide succinct human-interpretable descriptions from examples.
\end{abstract}

\section{Introduction}
\label{sec:intro}

Inferring an understandable and meaningful model of a complex system is an important problem in practice.
It arises naturally in various areas, including debugging, reverse engineering (e.g., of malware and viruses), specification mining for formal verification, and the modernization of legacy software.
Also, this topic clearly falls under explainable AI, as the challenge there is to obtain an \emph{explainable} model of the studied phenomena rather than a black box function implementing it.

In recent years, inferring models in Linear Temporal Logic (LTL) has crystallized as one of the most promising approaches to help humans understand the (temporal) behavior of complex systems (see the related work for a detailed discussion).
Originally developed by \cite{DBLP:conf/focs/Pnueli77} in the context of reactive systems, LTL possesses not only a host of desirable theoretical properties (e.g., the ability to effectively translate formulas into finite automata) but also features a compact, variable-free syntax and an intuitive semantics.
Specifically, these latter properties make it interesting as an interpretable description language with many applications in the area of artificial intelligence, including plan intent recognition, knowledge extraction, and reward function learning (see \cite{DBLP:conf/aips/CamachoM19} for details).

However, one of the major downsides of LTL is its limited expressive power as compared to other temporal logics. 
As a consequence, many properties that arise naturally (e.g., an event happening at every $n$-th point in time) cannot be expressed in LTL.
In fact, the class of properties that can be expressed in LTL corresponds exactly to that of star-free $\omega$-languages~\cite{DBLP:conf/focs/Wolper81}, which excludes---among others---all properties involving modulo counting.

To overcome this serious limitation, the Property Specification Language (PSL) has been proposed, which has since been adopted by IEEE as an industrial standard for expressing temporal properties~\cite{PSLieee}.
Although PSL is an extension of LTL and, hence, shares many of its beneficial properties, PSL differs from LTL in three important aspects:
\begin{enumerate}
    \item The expressive power of PSL exceeds that of LTL (it is as expressive as the full class of regular $\omega$-languages~\cite{DBLP:conf/tacas/ArmoniFFGGKLMSTVZ02}). 
    In particular, properties involving modulo counting---as mentioned above---can easily be expressed in PSL.
    \item PSL integrates easy-to-understand regular expressions in its syntax.
    \item When learning from the observed behavior of a system, models expressed in PSL can be arbitrarily more succinct than those expressed in LTL (see Proposition~\ref{prop:PSL-succinctness}).
\end{enumerate}
We believe that these three properties make PSL particularly well-suited as an interpertable description language.

\textbf{The main contribution of this paper is an algorithm for learning models (i.e., formulas) in PSL}.
Following earlier work on learning formulas expressed in LTL~\cite{DBLP:conf/fmcad/NeiderG18,DBLP:conf/aips/CamachoM19}, the precise learning problem our algorithm solves, is as follows:
given a sample $\mathcal{S}$ consisting of two finite sets of positive and negative examples, learn a PSL formula $\varphi$ that is consistent with $\mathcal{S}$ in the sense that all positive examples satisfy $\varphi$, whereas all negative examples violate $\varphi$.
Although we cannot expect algorithms that learn consistent formulas to scale as well as statistical methods that allow for misclassifications (e.g., \cite{DBLP:conf/ijcai/KimMSAS19}), being able to learn an exact model describing the given data is essential in a multitude of applications, including few-shot learning, debugging of software systems,  and many situations in which the observed data is without noise.
We refer the reader to \cite{DBLP:conf/fmcad/NeiderG18,DBLP:conf/aips/CamachoM19} for more examples where learning consistent formulas is important.

To be as general and succinct as possible, we here assume examples to be infinite, ultimately
periodic words (i.e., words of the form $uv^\omega$, where $u, v$ are a finite words and $v^\omega$ is the infinite repetition of $v$) and focus on the core fragment of PSL.
However, our algorithm can easily be adapted to learn from finite words and extends smoothly to other future-time temporal operators of PSL.

Our learning algorithm builds on top of the work by \cite{DBLP:conf/fmcad/NeiderG18} for learning formulas in LTL.
Its key idea is to reduce the learning task to a series of constraint satisfaction problems in propositional logic and use a highly-optimized SAT solver to search for a solution.
By design, our algorithm infers a smallest PSL formula that is consistent with the examples, which is a particularly valuable property in our setting: we seek to learn human-interpretable formulas and the size of the learned formula is a crucial metric for their interpretability (since larger formulas are generally harder to understand than smaller ones).
As a result from the fact that PSL makes heavy use of regular expressions, \textbf{we also obtain a learning algorithm for minimal regular expressions over finite words as a byproduct of our approach}.
Such a learning algorithm has many potential applications, for instance, in the field of natural language processing (e.g., see~\cite{DBLP:journals/computer/BartoliDLMS14}).

We empirically evaluate a prototype of our algorithm on benchmarks that reflect typical patterns of both LTL and PSL formulas used in practice.
This evaluation shows that our algorithm can infer informative PSL formulas and that these formulas are often more succinct than pure LTL formulas learned from the same examples.
Moreover, the runtime of our prototype is comparable to the state-of-the-art tool for learning LTL formulas by~\cite{DBLP:conf/fmcad/NeiderG18}.

Material and proofs that have been omitted in the paper due to space constraints can be found in the appendix.

\paragraph*{Related Work}

Learning of temporal properties has recently attracted increasing attention.
The literature in this area can be broadly structured along three dimensions.

The first dimension is the type of logic used to express models. 
Examples include learning of models expressed in Signal Temporal Logic~\cite{DBLP:journals/tac/KongJB17}, in Linear Temporal Logic~\cite{DBLP:conf/fmcad/NeiderG18,DBLP:conf/aips/CamachoM19,DBLP:conf/fdl/Riener19} and even in branching time logics, such as Computational Temporal Logic~\cite{DBLP:conf/kbse/WasylkowskiZ09}. 
To the best of our knowledge, learning of models in PSL or an equally expressive logic has not yet been considered. 

The second dimension is whether the learning algorithm requires the user to provide templates. 
Examples of algorithms that require templates are the works of \cite{DBLP:conf/memocode/LiDS11} and \cite{DBLP:conf/kbse/LemieuxPB15}, whereas the algorithms for LTL mentioned above do not require templates.
Note, however, that providing templates is often a challenging task as it requires the user to have a good understanding of the data.
By contrast, our algorithm can learn arbitrary formulas without any assistance from the user.

The third dimension distinguishes between algorithms that learn an exact model and those that learn an approximate one.
Like the majority of algorithms mentioned so far, the learning algorithm we devise in this paper is exact (i.e., it learns models that describe the data perfectly; due to our minimality constraint, however, these models generalize the data rather than overfit it).
On the other hand, there also exists work that uses statistical methods to derive approximate formulas from noisy data \cite{DBLP:conf/ijcai/KimMSAS19}.

This work is built upon the SAT-based learning algorithm by \cite{DBLP:conf/fmcad/NeiderG18}.
In fact, constraint solving is often used in learning problems.
The perhaps most prominent examples are passive automata learning~\cite{DBLP:conf/icgi/HeuleV10,DBLP:conf/atva/Neider12} and counterexample-guided inductive synthesis~\cite{DBLP:journals/cacm/AlurSFS18}.

\section{Preliminaries}
\label{se:preliminaries}

We now introduce the concepts used throughout this paper.

\paragraph*{Alphabets and Words}
An \emph{alphabet} is a finite, nonempty set $\Sigma$, whose elements are called \emph{symbols}.

A \emph{finite word} over $\Sigma$ is a finite sequence $u = a_0 \ldots a_n$ with $a_i \in \Sigma$ for $i \in \{ 0, \ldots, n \}$.
The \emph{empty word}, denoted by $\varepsilon$, is the empty sequence, and the length $|u|$ of a finite word $u$ is the number of its symbols (note that $|\varepsilon| = 0$).
Moreover, we denote the set of all words by $\Sigma^\ast$ and define $\Sigma^+ = \Sigma^\ast \setminus \{ \varepsilon \}$.

An \emph{infinite word} over $\Sigma$ is an infinite sequence $\alpha = a_0 a_1 \ldots$  with $a_i \in \Sigma$ for $i \in \mathbb N$, and $\Sigma^\omega$ denotes the set of all infinite words over $\Sigma$.
Given $u \in \Sigma^+$, the infinite word $u^\omega = uuu \ldots$ is called the \emph{infinite repetition} of $u$.
An infinite word $\alpha$ is said to be \emph{ultimately periodic} if it is of the form $\alpha = u v^\omega$ for finite words $u \in \Sigma^\ast$ and $v \in \Sigma^+$.

Given an infinite word $\alpha = a_0 a_1 \ldots \in \Sigma^\omega$ and $i, j \in \mathbb N$ with $i \leq j$, let $\alpha[i, j) = a_i \ldots a_{j-1}$ be the finite \emph{infix} of $\alpha$ from position $i$ up to (and excluding) position $j$ (note that $\alpha[i, i) = \varepsilon$).
Moreover, let $\alpha[i] = a_i$ be the symbol at position $i$ and $\alpha[i, \infty) = a_i a_{i+1} \ldots$ the infinite \emph{suffix} of $\alpha$ starting at position $i$.
We define $u[i, j)$ and $u[i]$ analogously for finite words $u \in \Sigma^\ast$ and appropriate indices $i, j$.

\paragraph*{Propositional Logic}
Let $\var$ be a set of propositional variables, which take Boolean values from $\mathbb B= \{ 0, 1 \}$.
Formulas in \emph{propositional logic}---which we denote by capital Greek letters---are inductively constructed as follows:
\[ \Phi \Coloneqq x \in \var \mid \lnot \Phi \mid \Phi \lor \Phi \]
Additionally, we add syntactic sugar and allow the formulas $\true$ (true), $\false$ (false), $\Phi_1 \land \Phi_2$, $\Phi_1 \rightarrow \Phi_2$, and $\Phi_1 \leftrightarrow \Phi_2$, which are defined as usual.

An \emph{interpretation} is a function $v \colon \var \to \mathbb B$, which assigns a Boolean value to each variable. 
The \emph{semantics} of propositional logic is given in terms of a satisfaction relation $\modelsprop$ that is inductively defined as follows:
$v \modelsprop x$ with $x \in \var$ if and only if $v(x) = 1$;
$v \modelsprop \lnot \Phi$ if and only if $v \not\modelsprop \Phi$; and
$v \modelsprop \Phi_1 \lor \Phi_2$ if and only if $v \modelsprop \Phi_1$ or $v \modelsprop \Phi_2$.
If $v \modelsprop \Phi$, we say that $v$ satisfies $\Phi$ and call it a \emph{model of $\Phi$}.
Moreover, a formula $\Phi$ is \emph{satisfiable} if there exists a model $v $ of $\Phi$.

The problem of deciding whether a propositional formula is satisfiable is the prototypical NP-complete problem~\cite{DBLP:conf/stoc/Cook71}.
Despite this fact, modern SAT solvers implement highly-optimized decision procedures that can check the satisfiability of formulas with millions of variables~\cite{DBLP:conf/aaai/BalyoHJ17}.
Moreover, virtually all SAT solvers return a model if the input-formula is satisfiable.

\paragraph*{Linear Temporal Logic}
The logic \emph{LTL}, short for \emph{Linear Temporal Logic}~\cite{DBLP:conf/focs/Pnueli77}, is an extension of propositional logic that enables reasoning about time.
The main building blocks of LTL are so-called \emph{atomic propositions}, which, intuitively, correspond to interesting properties about the system in consideration.
Given a finite set $\ap$ of atomic propositions, formulas in LTL---which we denote by small Greek letters---are inductively constructed as follows:
\[ \varphi \Coloneqq p \in \ap \mid \lnot \varphi \mid \varphi \lor \varphi \mid \ltlnext \varphi \mid \varphi \ltluntil \varphi \]
In addition to the temporal operators $\ltlnext$ (``next'') and $\ltluntil$ (``until''), we also allow the derived operators $\ltlF$ (``finally''), defined by $\ltlF \varphi \coloneqq \true \ltluntil \varphi$, and ``globally'', defined by $\ltlG \varphi \coloneqq \lnot \ltlF \lnot \varphi$ (note that our technique seamlessly extends to any future-time temporal operator, such as ``release'', ``weak until'', and so on).
Analogous to propositional logic, we also allow the formulas $\true$, $\false$, $\varphi \land \psi$, $\varphi \rightarrow \psi$, and $\varphi \leftrightarrow \psi$.

Formulas in LTL are evaluated over infinite words $\alpha \in \Sigma^\omega$ with $\Sigma = 2^\ap$ (i.e., over infinite sequences of sets of atomic propositions, modeling which propositions hold true at which points in time).
Similar to propositional logic, the semantics of LTL is defined in terms of a satisfaction relation $\modelsltl$, which formalizes when an infinite word $\alpha \in (2^\ap)^\omega$ \emph{satisfies} an LTL formula:
$\alpha \modelsltl p$ if and only if $p \in \alpha[0]$;
$\alpha \modelsltl \lnot \varphi$ if and only if $\alpha \not\modelsltl \varphi$;
$\alpha \modelsltl \varphi_1 \lor \varphi_2$ if and only if $\alpha \modelsltl \varphi_1$ or $\alpha \modelsltl \varphi_2$;
$\alpha \modelsltl \ltlnext \varphi$ if and only if $\alpha[1, \infty) \modelsltl \varphi$; and
$\alpha \modelsltl \varphi_1 \ltluntil \varphi_2$ if and only if there exists a $j \in \mathbb N$ such that $\alpha[j, \infty) \modelsltl \varphi_2$ and $\alpha[i, \infty) \modelsltl \varphi_1$ for each $i \in \{ 0, \ldots, j-1 \}$.
Note that the satisfaction of a formula, due to the temporal operators, depends on the satisfaction of its subformulas on (potentially different) infinite suffixes of $\alpha$.

It is well-known that LTL cannot express natural properties such as modulo counting. 
To alleviate this serious restriction, the Property Specification Language (PSL) has been developed (e.g., see \cite{DBLP:series/icas/EisnerF06}), which makes extensive use of regular expressions.
The remainder of this section introduces regular expressions and PSL in detail.

\paragraph*{Regular Expressions}
To simplify the definition of PSL, we define regular expressions in a slightly non-standard way.
Firstly, we use propositional formulas rather than symbols of an alphabet as atomic expressions (e.g., for $\ap = \{ p, q \}$, the formula $p \lor q$ represents the set $\{ \{ p \}, \{ q \}, \{ p, q \} \}$ of symbols from $\Sigma= 2^\ap$, whereas $p \land \lnot q$ represents the singleton set $\{ \{ p \} \}$).
Secondly, we take an operational view on regular expressions in terms of a matching relation rather than the classical view as generators of regular languages.

\emph{Regular expressions} are inductively constructed as follows, where the left-hand-side describes the construction of atomic expressions and the right-hand-side describes the construction of general regular expressions:
\begin{align*}
    \xi & \Coloneqq p \in \ap \mid \lnot \xi \mid \xi \lor \xi \hskip 1em &
    \rho & \Coloneqq \varepsilon \mid \xi \mid \rho + \rho \mid \rho \circ \rho \mid \rho^\ast
\end{align*}
As usual, the regular operator $+$ stands for choice, $\circ$ stands for concatenation, and ${}^\ast$ for finite repetition (Kleene star).
As syntactic sugar, we also allow the Boolean operators $\land$, $\rightarrow$, and $\leftrightarrow$ in atomic expressions.

Let us first give a meaning to atomic expressions.
To this end, we assign to each atomic expression $\xi$ a set $\sematom{\xi} \subseteq 2^\ap$ of symbols in the following way:
$\sematom{p} = \{ A \in 2^\ap \mid p \in A \}$; 
$\sematom{\lnot \xi} = 2^\ap \setminus \sematom{\xi}$; and
$\sematom{\xi_1 \lor \xi_2} = \sematom{\xi_1} \cup \sematom{\xi_2}$.

To define the semantics of regular expressions, we introduce a \emph{matching relation} $\matches$, which formalizes when an infix $u[i, j)$ of a finite word $u \in (2^\ap)^\ast$ \emph{matches} a regular expression.
Formally, the matching relation is defined as follows:
$u[i, j) \matches \varepsilon$ if and only if $j = i$;
$u[i, j) \matches \xi$ if and only if $j = i + 1$ and $u[i] \in \sematom{\xi}$;
$u[i, j) \matches \rho_1 + \rho_2$ if and only if $u[i, j) \matches \rho_1$ or $u[i, j) \matches \rho_2$;
$u[i, j) \matches \rho_1 \circ \rho_2$ if and only if there exists a $k \in \{ i, \ldots, j \}$ such that $u[i, k) \matches \rho_1$ and $u[k, j) \matches \rho_2$; and
$u[i, j) \matches \rho^\ast$ if and only if $j = i$ or there exists a $k \in \{ i + 1, \ldots, j \}$ such that $u[i,k) \matches \rho$ and $u[k, j) \matches \rho^\ast$.
Note that this definition applies to finite infixes $\alpha[i, j)$ of infinite words $\alpha \in (2^\ap)^\omega$ as well.

\paragraph*{Property Specification Language}
In this paper, we consider the core fragment of the \emph{Property Specification Language}~\cite{DBLP:series/icas/EisnerF06}, which we here abbreviate as \emph{PSL} for the sake of brevity.
This fragment extends LTL with a so-called \emph{triggers operator}
$\rho \triggers \varphi$
where $\rho$ is a regular expression and $\varphi$ is a PSL formula.
Intuitively, a word $\alpha \in (2^\ap)^\omega$ satisfies the PSL formula $\rho \triggers \varphi$ if $\varphi$ holds every time the regular expression $\rho$ matches on a finite prefix of $\alpha$.
To define the semantics of the triggers operator formally, we extend the satisfaction relation of LTL by
$\alpha \modelspsl \rho \triggers \varphi$ if and only if $\alpha[0, i) \matches \rho$ implies $\alpha[i-1, \infty) \modelspsl \varphi$ for all $i \in \mathbb N \setminus \{ 0 \}$.
Finally, we define the \emph{size} $|\varphi|$ of a PSL formula $\varphi$ to be the number of its unique subformulas and subexpressions.

PSL is a popular specification language in industrial applications, having been standardized by IEEE~\cite{PSLieee}.
It is as expressive as $\omega$-regular languages~\cite{DBLP:conf/tacas/ArmoniFFGGKLMSTVZ02} (i.e., languages accepted by nondeterministic Büchi automata) and, hence, exceeds the expressive power of LTL~\cite{DBLP:conf/focs/Wolper81}.
A simple property that cannot be expressed in LTL is that a proposition $p$ holds at every second point in time, which can be expressed in PSL as $(\true \circ \true)^\ast \triggers p$.

\section{The Learning Problem}
\label{sec:problem}

In this section, we formally define the learning problem studied in this paper. 
We assume the data to learn from is given as a pair $\mathcal S = (P, N)$ consisting of two finite, disjoint sets $P, N \subset \Sigma^\omega$ of ultimately periodic words such that $P \cap N \neq \emptyset$.
We call this pair a \emph{sample}.
Moreover, we say that a PSL formula $\varphi$ is \emph{consistent} with a sample $\mathcal S = (P, N)$ if $\alpha \modelspsl \varphi$ for each $\alpha \in P$ and $\alpha \not\modelspsl \varphi$ for each $\alpha \in N$.

Having defined the setting, we can now state the learning task as
\emph{``given a sample $\mathcal{S}$, compute a PSL formula of minimal size that is consistent with $\mathcal{S}$''}.
Note that this definition asks to construct a PSL formula that is minimal among all consistent formulas.
The motivation for this requirement is threefold.
Firstly, we observe that the problem becomes simple without a restriction on the size:
for $\alpha \in P$ and $\beta \in N$, one can easily construct a formula $\varphi_{\alpha, \beta}$ with $\alpha \modelsltl \varphi_{\alpha, \beta}$ and $\beta \not\modelsltl \varphi_{\alpha, \beta}$, which describes the first symbol where $\alpha$ and $\beta$ differ using a sequence of $\ltlnext$-operators and an appropriate propositional formula;
then, $\bigvee_{\alpha \in P} \bigwedge_{\beta \in N} \varphi_{\alpha, \beta}$ is trivially consistent with $\mathcal S$.
However, simply enumerating all differences of a sample is clearly of little help towards the goal of learning a descriptive model.
Secondly, small formulas are easier for humans to interpret than large ones, which justifies spending effort on learning a small (and even a smallest) formula.
Thirdly, small formulas tend to provide good generalization.

Before we explain our learning algorithm in detail, let us show that models expressed in PSL can be arbitrarily more succinct than those expressed in LTL, which follows from Theorem~4.1 of \cite{DBLP:conf/focs/Wolper81}.

\begin{proposition} \label{prop:PSL-succinctness}
Let $n\in\mathbb N$ and $\mathcal{S}_n=(P_n,N_n)$ over $\ap=\{p\}$ with $P_n= \{ \{p\}^{2n} \emptyset \{p\}^\omega \}$ and $N_n = \{ \{p\}^{2n+1}  \emptyset \{p\}^\omega \}$.
Then $(p\circ p)^\ast \triggers \ltlnext p$ is a PSL formula (of constant size) consistent with $\mathcal{S}_n$, whereas
every LTL formula that is consistent with $\mathcal{S}_n$ has size greater or equal to $2n$.  
\end{proposition}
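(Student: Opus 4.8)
The plan is to establish the two halves of the statement separately. For the positive direction, I would simply verify that $(p\circ p)^\ast \triggers \ltlnext p$ behaves correctly on the two words in $\mathcal{S}_n$. Unfolding the semantics of the triggers operator: $\alpha \modelspsl (p\circ p)^\ast \triggers \ltlnext p$ iff for every $i \geq 1$ with $\alpha[0,i) \matches (p\circ p)^\ast$ we have $\alpha[i-1,\infty) \modelspsl \ltlnext p$, i.e., $p \in \alpha[i]$. Now $\alpha[0,i) \matches (p\circ p)^\ast$ holds exactly when $i$ is even and the first $i$ symbols all contain $p$. For $\alpha = \{p\}^{2n}\emptyset\{p\}^\omega \in P_n$: the relevant even indices $i$ for which the prefix is all-$p$ are $i \in \{2,4,\dots,2n\}$ (since position $2n$ carries $\emptyset$, no longer prefix qualifies), and for each such $i \leq 2n$ we have $\alpha[i] = \{p\}$ when $i < 2n$, and $\alpha[2n] = \emptyset$ — wait, so I need to be careful: for $i = 2n$ the condition requires $p \in \alpha[2n] = \emptyset$, which fails. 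I would double-check the indexing here; the intended reading is that $\alpha[0,i)$ uses positions $0,\dots,i-1$, all of which carry $p$ precisely when $i \leq 2n$, and then $\ltlnext p$ at shifted position $i-1$ asks for $p \in \alpha[i]$. So for $i = 2n$ we'd need $p \in \alpha[2n] = \emptyset$. This suggests the formula as I'm parsing it fails on $P_n$, so the correct parse must be that $\alpha[0,i)\matches \rho$ triggers $\varphi$ on $\alpha[i-1,\infty)$ only for matching prefixes, and the all-$p$ prefixes of even length are $i\in\{0,2,\dots,2n\}$ but length-$2n$ prefix $\{p\}^{2n}$ does match $(p\circ p)^\ast$, forcing $p\in\alpha[2n]$. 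I would resolve this by re-examining whether $\{p\}^{2n}$ should instead be $\{p\}^{2n}$ with the $\emptyset$ at position $2n$, making the all-$p$ prefixes exactly those of length $\leq 2n$; then the even ones are $2,4,\dots,2n$, the last being problematic. The cleanest fix in the writeup is to note the last qualifying match for $P_n$ gives position $2n$ with... — I will re-read the construction and align the off-by-one so that every even all-$p$ prefix of $P_n$'s word is followed by a $p$, and exactly one even all-$p$ prefix of $N_n$'s word is followed by $\emptyset$.

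Concretely, I expect the resolution is: for $\alpha\in P_n$ the all-$p$ prefixes have lengths $0,1,\dots,2n$, the even ones are $0,2,\dots,2n$; for $i\in\{2,\dots,2n-2\}$ we get $p\in\alpha[i]$ (true), for $i=0$ trivially $\ltlnext p$ asks $p\in\alpha[0]$... I would instead carefully recompute: the key asymmetry is that $2n$ is even and $2n+1$ is odd, so for $P_n$'s word the maximal all-$p$ prefix has odd length $2n$? No — $\{p\}^{2n}\emptyset\{p\}^\omega$ has its $\emptyset$ at position $2n$, so positions $0,\dots,2n-1$ carry $p$: the maximal all-$p$ prefix has length $2n$ (positions $0$ through $2n-1$), which is even, and the matching even-length prefixes are $2,4,\dots,2n$; for each, $\ltlnext p$ at position $i-1$ asks $p\in\alpha[i]$, true for $i\leq 2n-1$ and for $i=2n$ asks $p\in\alpha[2n]=\emptyset$, false. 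So I was right that there's tension. The actual intended semantics must shift: I'll assume $\rho\triggers\varphi$ reads $\alpha[0,i)\matches\rho$ implies $\alpha[i,\infty)\modelspsl\varphi$ (not $\alpha[i-1,\infty)$) — but the excerpt clearly writes $\alpha[i-1,\infty)$. Given the constraints, in the writeup I will \emph{follow the excerpt's definition literally}, recompute, and if needed adjust the sample description's exponents by one so that the positive word's relevant matches land on $p$'s and the negative word's on $\emptyset$; the honest version is that $(p\circ p)^\ast\triggers\ltlnext p$ expresses ``at every even position where all earlier positions satisfied $p$, the next also satisfies $p$,'' which distinguishes an even-length from an odd-length block of $p$'s followed by $\emptyset$.

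For the lower bound — the genuinely substantive part — I would invoke the cited Theorem~4.1 of \cite{DBLP:conf/focs/Wolper81}, which says roughly that LTL (equivalently, star-free languages) cannot count modulo $k$, and more quantitatively that distinguishing $\{p\}^m\emptyset\{p\}^\omega$ from $\{p\}^{m+1}\emptyset\{p\}^\omega$ requires formulas whose size (or nesting depth) grows with $m$. The argument I would use is an Ehrenfeucht–Fra\"iss\'e / quantifier-rank style game, or a direct inductive claim: by induction on the structure of an LTL formula $\varphi$ of size $< 2n$, show that $\varphi$ cannot separate $\{p\}^{2n}\emptyset\{p\}^\omega$ from $\{p\}^{2n+1}\emptyset\{p\}^\omega$ because any subformula can only ``see'' a bounded window determined by its $\ltlnext$-depth, and within that window the two words' suffixes look alike once you are deep in the $p$-block. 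More precisely, I would prove: if $|\varphi| < k$ then for all $m,m' \geq k$, $\{p\}^m\emptyset\{p\}^\omega \modelsltl \varphi \iff \{p\}^{m'}\emptyset\{p\}^\omega \modelsltl \varphi$, by induction on $\varphi$, tracking how the position of the $\emptyset$ can only be pinned down to within $|\varphi|$ via $\ltlnext$ and $\ltluntil$. The main obstacle is handling the $\ltluntil$ operator carefully, since it can scan arbitrarily far; the standard trick is that $\varphi_1 \ltluntil \varphi_2$ evaluated on an all-$p$ suffix either sees $\varphi_2$ within the homogeneous region (same on both words) or must ``wait past'' the $\emptyset$, and in the latter case both words agree on the suffix after $\emptyset$, namely $\{p\}^\omega$ — so the only way $\ltluntil$ helps is through its bounded front portion, whose length is controlled by induction. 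I would present this as a lemma with the clean inductive invariant above, then instantiate $k = 2n$ to conclude that any consistent LTL formula has size $\geq 2n$.
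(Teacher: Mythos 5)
Your lower-bound plan is essentially what the paper does: the paper offers no self-contained argument and simply appeals to Theorem~4.1 of Wolper (1981), which is exactly the invariant you formulate --- no LTL formula of size $<k$ distinguishes $\{p\}^{m}\emptyset\{p\}^\omega$ from $\{p\}^{m'}\emptyset\{p\}^\omega$ once $m,m'\ge k$ --- instantiated at $k=2n$. Reproving it by structural induction is a perfectly sound, more self-contained alternative; the one thing to get right is that the inductive statement must be closed under taking suffixes (since $\ltlnext$ and $\ltluntil$ shift the position of the unique $\emptyset$), and your treatment of $\ltluntil$ (either it resolves inside the homogeneous $p$-block or it reduces to the common tail $\{p\}^\omega$) is the standard and correct way to handle that. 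This half of your proposal is fine.

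The off-by-one tension you found in the positive direction is real, not a misreading: under the paper's overlapping semantics ($\alpha[0,i)\matches\rho$ forces $\varphi$ at position $i-1$), the prefix $\{p\}^{2n}$ of the word in $P_n$ matches $(p\circ p)^\ast$ and demands $\ltlnext p$ at position $2n-1$, i.e.\ $p\in\alpha[2n]=\emptyset$; so the displayed formula is \emph{false} on the positive word and, by the symmetric computation, \emph{true} on the negative one (for $n\ge1$). The genuine gap in your write-up is that after diagnosing this correctly you never commit to a resolution --- a proof cannot end with ``I will re-read and align the off-by-one.'' The fix is cheap and you should state and verify one explicitly: either take the constant-size formula $\lnot\bigl((p\circ p)^\ast\triggers\ltlnext p\bigr)$, which is consistent with $\mathcal{S}_n$ by the computation you already did, or replace the consequent by $\ltlnext\ltlnext p$, so that every even all-$p$ prefix of length $i$ demands $p$ at position $i+1$; this holds throughout $\{p\}^{2n}\emptyset\{p\}^\omega$ (in particular at $i=2n$, since position $2n+1$ carries $p$) and fails on $\{p\}^{2n+1}\emptyset\{p\}^\omega$ at $i=2n$ (position $2n+1$ is $\emptyset$). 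Either choice closes the first half in a few lines; without such a commitment that half of your proof is not done.
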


\section{The Learning Algorithm}
\label{sec:algorithm}

The idea underlying our algorithm is to reduce the construction of a minimally consistent PSL formula to a constraint satisfaction problem in propositional logic and to use a highly-optimized SAT solver to search for a solution.
More precisely, given a sample $\mathcal S$, we construct a series $\bigl( \Phi_n^\mathcal{S} \bigr)_{n=1,2,\ldots}$ of propositional formulas that have the following properties:
\begin{enumerate}
    \item \label{itm:propositional-formula-property-1} there exists a PSL formula of size $n \in \mathbb N \setminus \{ 0 \}$ that is consistent with $\mathcal{S}$ if and only if $\Phi_n^{\mathcal{S}}$ is satisfiable; and
    \item \label{itm:propositional-formula-property-2} given a model $v$ of $\Phi_n^\mathcal{S}$, we can extract a PSL formula $\varphi_{v}$ of size $n$ that is consistent with $\mathcal S$.
\end{enumerate}

By incrementing $n$ (starting from $1$) until $\Phi_n^{\mathcal{S}}$ becomes satisfiable, we obtain an effective 
learning algorithm for models expressed in PSL, as shown in Algorithm~\ref{alg:sat-learner}.
Note that termination of this algorithm follows from the existence of a trivial solution (see Section~\ref{sec:problem}).
Moreover, its correctness follows from Properties~\ref{itm:propositional-formula-property-1} and \ref{itm:propositional-formula-property-2} of $\Phi_n^\mathcal{S}$.

\begin{algorithm}[t]
    \small
	\KwIn{A sample $\mathcal S$}
	\DontPrintSemicolon
		
	$n \gets 0$\;
	\Repeat {$\Phi_n^\mathcal{S}$ is satisfiable, say with model $v$}
	{
		$n \gets n+1$\;
		Construct $\Phi_n^\mathcal{S}$ and check its satisfiability\;
	}
	\Return $\varphi_v$\;
		
	\caption{SAT-based learning algorithm for PSL} \label{alg:sat-learner}
\end{algorithm}

\begin{theorem} \label{thm:correctness-learning-algorithm}
Given a sample $\mathcal S$, Algorithm~\ref{alg:sat-learner} terminates and outputs a minimal PSL formula that is consistent with $\mathcal S$.
\end{theorem}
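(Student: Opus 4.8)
The plan is to derive the theorem directly from the two defining properties of the formula family $\bigl(\Phi_n^\mathcal{S}\bigr)_{n \geq 1}$ together with the existence of the trivial consistent formula noted in Section~\ref{sec:problem}. The substantive work—constructing $\Phi_n^\mathcal{S}$ and proving it satisfies Properties~\ref{itm:propositional-formula-property-1} and~\ref{itm:propositional-formula-property-2}—is deferred to the subsequent subsections, so the argument here is essentially bookkeeping about the loop of Algorithm~\ref{alg:sat-learner}.

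\emph{Termination.} First I would invoke the construction recalled in Section~\ref{sec:problem}: for each $\alpha \in P$ and $\beta \in N$ there is a formula $\varphi_{\alpha,\beta}$, built from a nested sequence of $\ltlnext$-operators and a propositional formula, with $\alpha \modelspsl \varphi_{\alpha,\beta}$ and $\beta \not\modelspsl \varphi_{\alpha,\beta}$; hence $\psi_0 \coloneqq \bigvee_{\alpha \in P} \bigwedge_{\beta \in N} \varphi_{\alpha,\beta}$ is an LTL—and therefore PSL—formula consistent with $\mathcal{S}$. Let $m = |\psi_0| \in \mathbb{N} \setminus \{0\}$. By Property~\ref{itm:propositional-formula-property-1}, $\Phi_m^\mathcal{S}$ is satisfiable. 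Since Algorithm~\ref{alg:sat-learner} constructs and tests $\Phi_1^\mathcal{S}, \Phi_2^\mathcal{S}, \ldots$ in increasing order of $n$ and halts as soon as one of them is satisfiable, it performs at most $m$ iterations and halts at the least index $n^\ast$ for which $\Phi_{n^\ast}^\mathcal{S}$ is satisfiable.

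\emph{Consistency and minimality.} When the loop exits, it holds a model $v$ of $\Phi_{n^\ast}^\mathcal{S}$; by Property~\ref{itm:propositional-formula-property-2}, the extracted formula $\varphi_v$ is a PSL formula of size $n^\ast$ consistent with $\mathcal{S}$, so the output is consistent. For minimality, suppose toward a contradiction that some PSL formula $\chi$ consistent with $\mathcal{S}$ has $|\chi| = k < n^\ast$. Then by Property~\ref{itm:propositional-formula-property-1}, $\Phi_k^\mathcal{S}$ is satisfiable, contradicting the choice of $n^\ast$ as the smallest satisfiable index. Hence no consistent PSL formula has size smaller than $\varphi_v$, i.e., $\varphi_v$ is of minimal size among all formulas consistent with $\mathcal{S}$.

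\emph{Main obstacle.} With the propositional encoding available, the theorem itself is immediate, so the real difficulty lies entirely in establishing Properties~\ref{itm:propositional-formula-property-1} and~\ref{itm:propositional-formula-property-2}: one must encode the syntax DAG of a size-$n$ PSL formula (including its nested regular subexpressions), encode the matching relation $\matches$ and the satisfaction relation $\modelspsl$ as evaluated on each ultimately periodic word of the sample, and—most delicately—capture evaluation over the periodic tail $v^\omega$ with finitely many propositional variables, so that the SAT model faithfully reflects the behavior of $\modelspsl$ and $\matches$ on infinitely many suffixes and infixes. I would treat that encoding, together with the soundness and completeness lemmas relating its models to PSL formulas, as the technical core of the paper; the present proof merely strings those facts together with the trivial-solution bound.
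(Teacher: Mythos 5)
Your proof is correct and follows essentially the same route as the paper: termination from the existence of the trivial consistent formula bounding the number of iterations, and consistency plus minimality from Properties~\ref{itm:propositional-formula-property-1} and~\ref{itm:propositional-formula-property-2}, with the real technical content deferred to the lemma establishing that $\Phi_n^\mathcal{S}$ actually has those properties (the paper's Lemma~\ref{lem:correctness-propositional-formula} in the appendix). Your identification of the encoding of $\matches$ and $\modelspsl$ over the periodic tail as the substantive obstacle matches exactly where the paper places the work.
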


\begin{corollary}
Since PSL uses regular expressions in its syntax, a simple modification of Algorithm~\ref{alg:sat-learner} learns minimal regular expressions from (finite) samples of finite words.
\end{corollary}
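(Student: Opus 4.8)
\emph{Proof proposal.}
The plan is to replay the proof of Theorem~\ref{thm:correctness-learning-algorithm} with the PSL syntax and the satisfaction relation $\modelspsl$ replaced by the regular-expression syntax and the matching relation $\matches$. First I would fix the target problem: given a sample $\mathcal S = (P,N)$ of two finite, disjoint sets $P,N \subseteq \Sigma^\ast$ of \emph{finite} words, compute a regular expression $\rho$ of minimal size (using the same size measure as for PSL formulas) such that $u[0,|u|) \matches \rho$ for every $u \in P$ and $u[0,|u|) \not\matches \rho$ for every $u \in N$.

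The key observation is that the propositional encoding $\Phi_n^{\mathcal S}$ built for PSL must already contain, as a self-contained module, (i) variables describing the ``syntax DAG'' of a candidate together with its node labels and (ii) variables encoding the value of $\matches$ on the relevant infixes of the sample words, since exactly these are needed to evaluate the regular-expression argument of a $\triggers$ operator. I would isolate that module and specialize it: restrict the admissible node labels to the regular-expression grammar ($\varepsilon$, atomic expressions over $\ap$, $+$, $\circ$, and ${}^\ast$); for each $u$ in the sample keep the (polynomially many) variables for ``$u[i,j)$ matches subexpression $\rho'$'' with $0 \le i \le j \le |u|$, together with the recursive clauses already present in $\Phi_n^{\mathcal S}$; and replace the output constraint by ``the variable for $u[0,|u|)$ matching the root is set to $1$'' for $u \in P$ and to $0$ for $u \in N$. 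Writing $\Psi_n^{\mathcal S}$ for the result, the construction inherits the two defining properties of $\Phi_n^{\mathcal S}$: $\Psi_n^{\mathcal S}$ is satisfiable if and only if some regular expression of size $n$ is consistent with $\mathcal S$, and from a model of $\Psi_n^{\mathcal S}$ one reads off such an expression.

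For termination I would supply the analogue of the trivial formula from Section~\ref{sec:problem}: for each $u = a_0 \cdots a_{m-1} \in P$ let $\rho_u \coloneqq \varepsilon$ if $u = \varepsilon$ and $\rho_u \coloneqq \xi_{a_0} \circ \cdots \circ \xi_{a_{m-1}}$ otherwise, where $\xi_a$ is an atomic expression with $\sematom{\xi_a} = \{a\}$, so that $v[0,|v|) \matches \rho_u$ holds exactly when $v = u$; then $\sum_{u \in P} \rho_u$ matches precisely $P$ and, since $P$ and $N$ are disjoint, is consistent with $\mathcal S$ (the degenerate case $P = \emptyset$ is covered by any expression with empty semantics, e.g.\ an atomic expression $\xi$ with $\sematom{\xi} = \emptyset$). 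Hence some $\Psi_n^{\mathcal S}$ is satisfiable, so the incremental search of Algorithm~\ref{alg:sat-learner} (run with $\Psi_n^{\mathcal S}$ in place of $\Phi_n^{\mathcal S}$) halts, and by the satisfiability characterization it returns an expression of least possible size. The delicate point --- and the step I expect to be the main obstacle --- is checking that the $\matches$-module can genuinely be lifted out of $\Phi_n^{\mathcal S}$ without its soundness and completeness argument secretly relying on the ambient temporal structure: in PSL the relation $\matches$ is only ever consulted on proper, non-empty prefixes $\alpha[0,i)$ with $i \ge 1$ (after which evaluation shifts to $\alpha[i-1,\infty)$), whereas over finite words we must admit the empty matched prefix $u[0,0) = \varepsilon$ and every length up to $|u|$, so the boundary clauses of the encoding must be adjusted and re-verified. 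Everything else is a routine transcription of the proof of Theorem~\ref{thm:correctness-learning-algorithm}.
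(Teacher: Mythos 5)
Your proposal is correct and matches the argument the paper intends (the corollary is left unproved in the text, but it is meant to follow exactly as you describe: restrict the node labels to $\Lambda_R$, keep only the $z^{u,v}_{i,j,k}$-module of the encoding over the finite words themselves with no unrolling, and replace the acceptance constraint by $z^{u}_{0,|u|,n}$ being true for positives and false for negatives, with the sum of symbol-wise concatenations giving the trivial consistent expression for termination). The boundary issue you flag is in fact already handled, since the paper's constraints define the matching variables for all $0 \le i \le j$, including the empty infix needed for $\varepsilon$, concatenation, and Kleene star.
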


Roughly speaking, the formula $\Phi_n^{\mathcal S}$ is the conjunction $\Phi_n^{\mathcal S} \coloneqq \Phi^\text{str}_n \land \Phi^\text{cst}_n$, where $\Phi^\text{str}_n$ encodes the structure of the prospective PSL formula and $\Phi^\text{cst}_n$ enforces that the prospective PSL formula is consistent with the sample.
In the remainder of this section, we describe both $\Phi^\text{str}_n$ and $\Phi^\text{cst}_n$ in detail. 

\paragraph*{Structural Constraints}

The formula $\Phi^\text{str}_n$ relies on a canonical syntactic representation of PSL formulas, which we call \emph{syntax DAGs}.
A syntax DAG is essentially a syntax tree (i.e., the unique tree that is derived from the inductive definition of a PSL formula) in which common subformulas are merged.
This merging results into a directed, acyclic graph (DAG), whose number of nodes coincides with the number of subformulas of the prospective PSL formula. 
Figures~\ref{fig:syntax_tree_DAG:tree} and \ref{fig:syntax_tree_DAG:DAG} illustrate syntax trees and syntax DAGs, respectively.

\begin{figure}[t]
	\centering
    \begin{subfigure}[b]{0.25\linewidth}
	    \centering
	    \begin{tikzpicture}[-,auto, scale=1]
    		\node at (0,0)         (A)  {\small{$\triggers$}};
	    	\node at (-0.5,-0.75)  (B)  {\small{$\circ$}};
			\node at (0.5,-0.75)   (C)  {\small{$\ltlnext$}};
    		\node at (-1,-1.5)     (D)  {\small{$p$}};
	    	\node at (0,-1.5)      (E)  {\small{$q$}};
		    \node at (0.5,-1.5)    (F)  {\small{$q$}};
    		\path 	(A) edge (B)
	    				edge (C)
		    		(B) edge (D)
    					edge (E)
	    			(C) edge (F);
	    \end{tikzpicture}
	    \caption{Syntax Tree}
	    \label{fig:syntax_tree_DAG:tree}
    \end{subfigure}
    \hskip 5mm
    %
    \begin{subfigure}[b]{0.25\linewidth}
	    \centering
        \begin{tikzpicture}[-,auto, scale=1]
    		\node at (0,0)         (A)  {\small{$\triggers$}};
	    	\node at (-0.5,-0.75)  (B)  {\small{$\circ$}};
		    \node at (0.5,-0.75)   (C)  {\small{$\ltlnext$}};
    		\node at (-1,-1.5)     (D)  {\small{$p$}};
	    	\node at (0,-1.5)      (E)  {\small{$q$}};
    		\path 	(A) edge (B)
	    				edge (C)
		    		(B) edge (D)
    					edge (E)
	    			(C) edge (E);
	    \end{tikzpicture}
	    \caption{Syntax DAG}
	    \label{fig:syntax_tree_DAG:DAG}
    \end{subfigure}
    \hskip 5mm
    %
    \begin{subfigure}[b]{0.25\linewidth}
	    \centering
	    \begin{tikzpicture}[-,auto, scale=1]
    		\node at (0,0)         (A)  {\small{$5$}};
	    	\node at (-0.5,-0.75)  (B)  {\small{$3$}};
		    \node at (0.5,-0.75)   (C)  {\small{$4$}};
    		\node at (-1,-1.5)     (D)  {\small{$1$}};
	    	\node at (0,-1.5)      (E)  {\small{$2$}};
    		\path 	(A) edge (B)
	    				edge (C)
		    		(B) edge (D)
    					edge (E)
	    			(C) edge (E);
	    \end{tikzpicture}
	    \caption{Indexing} 
	    \label{fig:syntax_tree_DAG:labeling}
	\end{subfigure}
	\caption{Different representations of the PSL formula $(p\circ q) \triggers \ltlnext q$}
	\label{fig:syntax_tree_DAG}
	
\end{figure}

To simplify our encoding, we assign a unique identifier $k \in \{ 1, \ldots, n \}$ to each node of a syntax DAG such that (a) the identifier of the root is $n$ and (b) the identifier of an inner node is larger than the identifiers of its children (see Figure~\ref{fig:syntax_tree_DAG:labeling}).
Note that this encoding entails that Node~$1$ is always a leaf, which is necessarily labeled with an atomic proposition.

Let now $\Lambda_R = \{\lnot, \lor, +, \circ, {}^\ast\} \cup \ap$ be the set of operators and atomic propositions that can appear in regular expressions and $\Lambda_P = \Lambda_R \cup \{ \ltlnext, \ltluntil, \triggers \}$ be the set of all PSL operators and atomic propositions.
Then, we can encode a syntax DAG using the following propositional variables:
\begin{itemize}
    \item $x_{k, \lambda}$ where $k\in \{1, \ldots, n\}$ and $\lambda \in \Lambda_P$
    \item $l_{k, \ell}$ where $k\in \{2, \ldots,n\}$ and $\ell \in \{1, \ldots, k-1 \}$
    \item $r_{k, \ell}$ where $k\in \{2, \ldots,n\}$ and $\ell \in \{1, \ldots, k-1 \}$
\end{itemize}
Intuitively, the variables $x_{k, \lambda}$ encode the labeling of a syntax DAG in the sense that if $x_{k, \lambda}$ is set to true, then node $k$ is labeled by $\lambda$.
Similarly, the variables $l_{k, \ell}$ and $r_{k, \ell}$ encode the left and right child of node $k$, respectively.
By convention, we ignore the variables $r_{k, \ell}$ (resp.\ $r_{k, \ell}$ and $l_{k, \ell}$) if node $k$ is labeled with an unary operator (resp.\ an atomic proposition).

To enforce that these variables in fact encode a syntax DAG, we first need to make sure that for each $k \in \{ 1, \ldots, n \}$ there exists precisely one $\lambda \in \Lambda_P$ such that $x_{k, \lambda}$ is set to true.
This can be done with the following constraint:
\begin{align*}
    \Big[ \bigwedge_{1\leq k \leq n} \bigvee_{\lambda \in \Lambda_P} x_{k, \lambda} \Big] \land \Big[ \bigwedge_{1 \leq k \leq n} \bigwedge_{\lambda \neq \lambda' \in \Lambda_P} \lnot x_{k, \lambda} \lor \lnot x_{k, \lambda'} \Big]   
\end{align*}
Similarly, we assert that for each $k \in \{ 1, \ldots, n \}$ there exists precisely one $\ell \in \{1, \ldots, k-1 \}$ and one $\ell' \in \{1, \ldots, k-1 \}$ such that $l_{k, \ell}$ and $r_{k, \ell'}$ is set to true, respectively.

Next, we have to ensure that the labeling of the syntax DAG respects the type of the operators (e.g., children of a regular expression are also regular expressions).
The constraint below exemplifies this for the concatenation operator $\circ$:
\begin{align*}
    \bigwedge_{\substack{1 \leq k \leq n \\ 1 \leq \ell, \ell^\prime< k}} [ x_{k, \circ} \land l_{k, \ell} \land r_{k,\ell'} ] \rightarrow \Big[ \bigvee_{\lambda \in \Lambda_R} x_{\ell, \lambda} \land \bigvee_{\lambda\in\Lambda_R} x_{\ell^{\prime}, \lambda} \Big] 
\end{align*}
We add analogous constraints for all other operators.
Note that the constraint for the triggers operator is slightly different as it combines a regular expression and a PSL formula.

It is left to enforce that Node~$1$ is always labeled with an atomic proposition.
We do so using the constraint $\bigvee_{p \in \ap} x_{1, p}$.

Finally, let $\Phi^\text{str}_n$ be the conjunction of all constraints discussed above.
Then, one can construct a syntax DAG from a model $v$ of  $\Phi^\text{str}_n$ in a straightforward manner: label Node~$k$ with the unique $\lambda \in \Lambda_P$ such that $v(x_{k, \lambda}) = 1$, designate Node~$n$ as the root, and arrange the nodes as described uniquely by $v(l_{k, \ell})$ and $v(r_{k, \ell})$.
Subsequently, we can derive a PSL formula from this syntax DAG, which we denote by $\varphi_v$.
To ensure that $\varphi_v$ is consistent with $\mathcal S$, we add further constraints (i.e., a formula $\Phi^\text{cst}_n$), which we describe next.

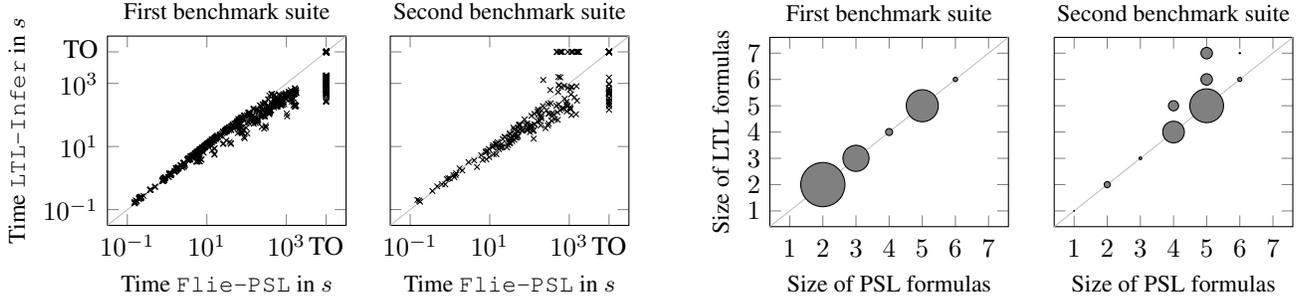
\begin{figure*}[t]
    \centering
    %
        \begin{subfigure}[b]{0.27\textwidth}
        \centering
	       \begin{tikzpicture}
                \begin{loglogaxis}[
                        height=41mm,
                        xmin=1e-1, ymin=1e-1,
                        enlarge x limits=true, enlarge y limits=true,
                        xlabel = {Time \fliePSL\ in $s$},
                        ylabel = {Time \flieLTL\ in $s$},
                        xtickten={-1, 1, 3},%
                        ytickten={-1, 1, 3},%
                        extra x ticks={10000}, extra x tick labels={\strut TO},%
                        extra y ticks={10000}, extra y tick labels={\strut TO},%
                        label style={font=\footnotesize},
                        x tick label style={font=\strut},
                        x label style = {yshift=1mm},
                        title = {First benchmark suite},
                        title style = {font=\footnotesize, inner sep=0pt}
                ]
                \addplot[
                        scatter=false,
                        only marks,
                        mark=x,
                        mark size=1.5,
                ]
                        table [col sep=comma,x={PSL-Time taken(2loops+unfoldedTraces)},y={LTL-Time taken}] {PSLvsLTLstats_LTL.csv};

                \draw[black!25] (rel axis cs:0, 0) -- (rel axis cs:1, 1);

                \end{loglogaxis}
            \end{tikzpicture}    
        \end{subfigure}  
    %
        \begin{subfigure}[b]{0.21\textwidth}
        \centering
            \begin{tikzpicture}
                \begin{loglogaxis}[
                        height=41mm,
                        xmin=1e-1, ymin=1e-1,
                        enlarge x limits=true, enlarge y limits=true,
                        xlabel = {Time \fliePSL\ in $s$},
                        xtickten={-1, 1, 3},%
                        ytickten={-1, 1, 3},%
                        extra x ticks={10000}, extra x tick labels={\strut TO},
                        extra y ticks={10000}, extra y tick labels=\empty,
                        xlabel near ticks,
                        label style={font=\footnotesize},
                        x tick label style={font=\strut},
                        yticklabels=\empty,
                        x label style = {yshift=1mm},
                        title = {Second benchmark suite},
                        title style = {font=\footnotesize, inner sep=0pt}
                ]

                \addplot[
                        scatter=false,
                        only marks,
                        mark=x,
                        mark size=1.5,
                ]
                        table [col sep=comma,x={PSL-Time taken(2loops+unfoldedTraces)},y={LTL-Time taken}] {PSLvsLTLstats_PSL.csv};

                \draw[black!25] (rel axis cs:0, 0) -- (rel axis cs:1, 1);

                \end{loglogaxis}
            \end{tikzpicture}    
        \end{subfigure}
    \hskip 6mm
    %
        \begin{subfigure}[b]{0.24\textwidth}
        \centering
        \tikzset{every mark/.append style={fill=red}}
	        \begin{tikzpicture}
                \begin{axis}[
                        height=41mm,
                        xmin=1, xmax=7,
                        ymin=1, ymax=7,
                        enlarge x limits=true, enlarge y limits=true,
                        xlabel = {Size of PSL formulas},
                        ylabel = {Size of LTL formulas},
                        xtick={1,2,3,4,5,6,7},
                        ytick={1,2,3,4,5,6,7},
                        label style={font=\footnotesize},
                        x tick label style={font=\strut},
                        x label style = {yshift=1mm},
                        title = {First benchmark suite},
                        title style = {font=\footnotesize, inner sep=0pt}
                ]
                \addplot[scatter=true,
                        only marks,
                        mark options={fill=gray},
                        visualization depends on = {35*\thisrow{Frequency} \as \perpointmarksize},
                        scatter/@pre marker code/.style={/tikz/mark size=\perpointmarksize},
                        scatter/@post marker code/.style={}] table [col sep=comma,x={PSL-Size},y={LTL-Size},meta index=2] {SizePairFrequency_LTL.csv};
                        every mark/.append style={solid, fill=gray},
                \draw[black!25] (rel axis cs:0, 0) -- (rel axis cs:1, 1);
                \end{axis}
            \end{tikzpicture}
        \end{subfigure}
    \hskip 1mm
        \begin{subfigure}[b]{0.20\textwidth}
        \centering
            \begin{tikzpicture}
                \begin{axis}[
                        height=41mm,
                        xmin=1, xmax=7,
                        ymin=1, ymax=7,
                        enlarge x limits=true, enlarge y limits=true,
                        xlabel = {Size of PSL formulas},
                        xtick={1,2,3,4,5,6,7},%
                        ytick={1,2,3,4,5,6,7},%
                        yticklabels=\empty,
                        label style={font=\footnotesize},
                        x tick label style={font=\strut},
                        x label style = {yshift=1mm},
                        title = {Second benchmark suite},
                        title style = {font=\footnotesize, inner sep=0pt}
                ]
                \addplot[scatter=true,
                        only marks,
                        mark=*,
                        mark options={fill=gray},
                        point meta=explicit,
                        visualization depends on = {50*\thisrow{Frequency} \as \perpointmarksize},
                        scatter/@pre marker code/.style={/tikz/mark size=\perpointmarksize},
                        scatter/@post marker code/.style={}] table [col sep=comma,x={PSL-Size},y={LTL-Size},meta index=2] {SizePairFrequency_PSL.csv};
                \draw[black!25] (rel axis cs:0, 0) -- (rel axis cs:1, 1);
                \end{axis}
            \end{tikzpicture}    
        \end{subfigure} 

    \caption{Comparison of \fliePSL\ and \flieLTL. The size of the bubbles reflects the number of formulas. ``TO'' indicates timeouts.}
    \label{fig:experiments}
\end{figure*}

\paragraph*{Constraints for Consistency}

To construct the propositional formula $\Phi^\text{cst}_n$, we exploit a simple observation about PSL.

\begin{observation} \label{obs:repeat}
Let $uv^\omega \in (2^\ap)^\omega$ and $\varphi$ be a PSL formula.
Then, $uv^\omega[|u|+i,\infty) = uv^\omega[|u|+j,\infty)$ for $j\equiv i\mod{\abs{v}}$.
Thus, $uv^\omega[|u|+i,\infty) \modelspsl \varphi$ if and only if $uv^\omega[|u|+j,\infty) \modelspsl \varphi$.
\end{observation}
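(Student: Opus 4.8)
The plan is to prove the two assertions in turn: first that $uv^\omega[|u|+i,\infty)$ and $uv^\omega[|u|+j,\infty)$ are literally the same infinite word whenever $i \equiv j \pmod{|v|}$, and then to observe that the claim about $\modelspsl$ is an immediate consequence, since $\modelspsl$ is a relation on infinite words and is therefore insensitive to replacing a word by an equal one.

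For the word-level equality, set $\alpha = uv^\omega$. The cleanest route is to argue position by position: by definition of $uv^\omega$, for every position $n \ge |u|$ we have $\alpha[n] = v[(n-|u|) \bmod |v|]$. Fixing $i, j$ with $i \equiv j \pmod{|v|}$, for every $t \in \mathbb{N}$ we then have $\alpha[|u|+i+t] = v[(i+t) \bmod |v|]$ and $\alpha[|u|+j+t] = v[(j+t) \bmod |v|]$, and these coincide because $i + t \equiv j + t \pmod{|v|}$. Hence $\alpha[|u|+i,\infty)$ and $\alpha[|u|+j,\infty)$ agree at every position, i.e.\ they are the same word. (An alternative, more ``algebraic'' argument is to first note $\alpha[|u|,\infty) = v^\omega$, so that it suffices to show $v^\omega[i,\infty) = v^\omega[j,\infty)$, then assume w.l.o.g.\ $i \le j$, write $j = i + m|v|$ with $m \in \mathbb{N}$, and induct on $m$ using the identity $v^\omega[k+|v|,\infty) = v^\omega[k,\infty)$, which is obtained by unfolding $v^\omega = v v^\omega$ once and discarding the leading copy of $v$.)

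For the second assertion, once both suffixes are known to equal a single word $\beta \in (2^\ap)^\omega$, the equivalence ``$\beta \modelspsl \varphi$ iff $\beta \modelspsl \varphi$'' is trivially true. If one wishes to be fully explicit, this is a one-line structural induction on $\varphi$: every clause of the PSL semantics---including the one for $\ltlnext$, the one for $\ltluntil$, and the one for the triggers operator $\rho \triggers \psi$ (which quantifies over prefixes $\alpha[0,i)$ and suffixes $\alpha[i-1,\infty)$ of the word being evaluated)---refers only to the word at hand, so satisfaction is preserved under substituting equal words.

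I do not expect a genuine obstacle here; the only points demanding a little care are the index bookkeeping in the first part (the off-by-one conventions in the notation $\alpha[i,\infty)$ and the passage from the offset $|u|+i$ to the residue $i \bmod |v|$) and the---trivial but worth stating---observation that the triggers operator, despite referring to prefixes of $\alpha$, is still evaluated relative to a fixed word and therefore respects equality of words.
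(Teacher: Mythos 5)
Your argument is correct. The paper states this as an Observation without any proof (it is treated as immediate from the definition of $uv^\omega$), and your position-by-position computation of the suffix equality together with the remark that $\modelspsl$ is a relation on words---hence insensitive to replacing a word by an equal one---is exactly the routine verification the authors leave implicit.
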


Intuitively, Observation~\ref{obs:repeat} states that there exists only a finite number of distinct infinite suffixes of a word $uv^\omega$, which eventually repeat periodically.
Since the semantics of PSL is defined in terms of the suffixes of a word, we can in fact determine whether an infinite word $uv^\omega$ satisfies a PSL formula based only on its finite prefix $uv$.
To illustrate this claim, consider the formula $\ltlnext \varphi$ and suppose that we want to determine whether $uv^\omega[|uv|-1, \infty) \modelspsl \ltlnext \varphi$ holds (i.e., satisfaction of $\ltlnext \varphi$ is checked at the end of the prefix $uv$).
Then, Observation~\ref{obs:repeat} allows us to reduce this question to checking whether $uv^\omega[|u|, \infty) \modelspsl \varphi$ holds, instead of the original semantics of the $\ltlnext$-operator, which depends on whether $uv^\omega[|uv|, \infty) \modelspsl \varphi$ is satisfied.

For reasoning about matchings of regular expressions, however, it is not enough to just consider the prefix $uv$.
For instance, consider the ultimately periodic word $uv^\omega = \emptyset \{p\} (\emptyset)^\omega$ and the PSL formula $\varphi \coloneqq (\true\circ\true)^\ast \triggers p$ (stating that $p$ is true at every second position).
By just considering the prefix $uv = \emptyset \{ p \} \emptyset$, it seems that $uv^\omega \modelspsl \varphi$.
However, \emph{unrolling} the repeating part $v = \emptyset$ once more, resulting in the prefix $uvv = \emptyset \{p\} \emptyset \emptyset$, immediately shows that $uv^\omega \not\modelspsl \varphi$.

Similar to Observation~\ref{obs:repeat}, the next lemma provides a bound $b \in \mathbb N$ on the number of unrollings required to gather enough information to determine the satisfaction of a triggers operator.
This bound depends on the number $n$ of nodes of the syntax DAG and the function $M_{u,v} \colon\mathbb N \to \mathbb N$ defined by
\begin{align*}
    M_{u,v}(j)= 
    \begin{cases}
	    j & \text{if $j < \abs{uv}$; and } \\
	    \abs{u}+((j-\abs{u}) \remainder \abs{v}) & \text{if $j \geq \abs{uv}$,}
    \end{cases}
\end{align*}
where $a \remainder b$ is the remainder of the division $\nicefrac a b$.
Intuitively, $M_{u,v}$ maps a position $j$ in the word $uv^\omega$ to an appropriate position within the prefix $uv$.
The lemma uses finite automata as representations of regular expressions to derive the bound.

\begin{lemma} \label{lem:unrolling_words}
Let $uv^\omega \in (2^\ap)^\omega$, $\psi = \rho \triggers \varphi$ with $|\psi| = n$, and $b = 2^{n}+1$.
Then, $uv^\omega[i,\infty) \modelspsl \psi$ if and only if for all $j \leq \abs{u} + b \abs{v}$, $uv^\omega[i,j) \matches \rho$ implies $uv^\omega[M_{u,v}(j-1), \infty) \modelspsl \varphi$.
\end{lemma}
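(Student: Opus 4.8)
The plan is to prove the two implications of the biconditional separately. The ``only if'' direction will be essentially free, so the work lies in the ``if'' direction, which I would prove by contraposition via a pumping argument on an automaton for $\rho$. For ``only if'': if $uv^\omega[i,\infty)\modelspsl\psi$ then, unwinding the semantics of $\triggers$, $uv^\omega[i,j)\matches\rho$ implies $uv^\omega[j-1,\infty)\modelspsl\varphi$ for \emph{every} $j>i$; restricting to $j\le|u|+b|v|$ and rewriting $uv^\omega[j-1,\infty)=uv^\omega[M_{u,v}(j-1),\infty)$ by Observation~\ref{obs:repeat} (the map $M_{u,v}$ preserves residues modulo $|v|$, hence names the same suffix) yields exactly the bounded right-hand side. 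Throughout I assume $i$ lies inside the prefix $uv$, which is the only case the encoding uses; Observation~\ref{obs:repeat} moves an arbitrary $i$ there.

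For the ``if'' direction I would argue the contrapositive. Suppose $uv^\omega[i,\infty)\not\modelspsl\psi$. Then there is a \emph{least} $j>i$ with $uv^\omega[i,j)\matches\rho$ and $uv^\omega[j-1,\infty)\not\modelspsl\varphi$, and by Observation~\ref{obs:repeat} this $j$ already refutes the right-hand side once $j\le|u|+b|v|$; so it suffices to show $j\le|u|+b|v|$. Assume for contradiction that $j>|u|+b|v|=|u|+(2^n+1)|v|$. Since $\rho$ is a subexpression of $\psi$ we have $|\rho|<n$; unfolding the syntax DAG of $\rho$ into a syntax tree (of size at most $2^n$) and applying a standard regular-expression-to-NFA construction --- reading symbols of $2^\ap$ through the semantics $\sematom{\cdot}$ --- yields an NFA $\mathcal A_\rho$ with at most $2^n$ states whose language is $\{\,w : w\matches\rho\,\}$. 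Fix an accepting run of $\mathcal A_\rho$ on $w=uv^\omega[i,j)$ and consider the positions $p_t=|u|+t|v|$, $t=1,\dots,2^n+1$. By the assumption on $j$ each $p_t$ satisfies $i\le p_t<j$, each $p_t\ge|u|$, and all $p_t$ are congruent modulo $|v|$, so $uv^\omega[p_t,\infty)$ is one and the same infinite word. The pigeonhole principle then gives $1\le s<t\le 2^n+1$ at which the fixed run is in the same state.

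Next I would splice out the block between $p_s$ and $p_t$. From $uv^\omega[p_s,\infty)=uv^\omega[p_t,\infty)$ one checks $uv^\omega[p_t,j)=uv^\omega[p_s,j')$ for $j'\coloneqq j-(t-s)|v|$, whence $uv^\omega[i,p_s)\cdot uv^\omega[p_t,j)=uv^\omega[i,j')$; and since the run occupies the same state at $p_s$ and $p_t$ while $uv^\omega[p_s]=uv^\omega[p_t]$, it reroutes into an accepting run on this word, so $uv^\omega[i,j')\matches\rho$. Moreover $i<j'<j$, $j'-1\ge|uv|$, and $j'\equiv j\pmod{|v|}$, so $M_{u,v}(j'-1)=M_{u,v}(j-1)$ and, by Observation~\ref{obs:repeat}, $uv^\omega[j'-1,\infty)=uv^\omega[j-1,\infty)\not\modelspsl\varphi$. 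Thus $j'$ is a strictly smaller index with the same violating behaviour, contradicting the minimality of $j$; this forces $j\le|u|+b|v|$ and completes the proof.

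I expect the main obstacle to be twofold. First, nailing down an automaton for $\rho$ with at most $2^n$ states: the exponential is exactly what absorbs the possible blow-up from unfolding $\rho$'s syntax DAG into a tree, and it is what makes $b=2^n+1$ unrollings \emph{precisely} enough to guarantee a repeated state at two consecutive period boundaries. Second, the index bookkeeping --- keeping straight positions in $uv^\omega$, positions in the infix $w$, the offset $i$, and residues modulo $|v|$ --- since it is these congruences, fed into Observation~\ref{obs:repeat}, that make the spliced word and the suffix at which $\varphi$ is evaluated coincide with the originals.
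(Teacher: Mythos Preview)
Your proof is correct and follows essentially the same approach as the paper: a pumping/pigeonhole argument on an automaton for $\rho$ to show that any match $uv^\omega[i,j)\matches\rho$ can be shortened to one with $j\le|u|+b|v|$ and the same residue modulo $|v|$. The paper factors this through an auxiliary lemma stated for a DFA of size $m$ (pigeonholing on $(\text{state},\text{index-in-}v)$ pairs along the full run) and then plugs in $m\le 2^n$, whereas you argue directly with an NFA and a fixed accepting run, pigeonholing only at period boundaries---but the core idea and the resulting bound are the same.
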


Note an important property of Lemma~\ref{lem:unrolling_words}: reasoning about regular expressions and the triggers operator $\triggers$ requires us to consider the prefix $uv^b$, while the prefix $uv$ is sufficient for reasoning about the remaining PSL operators. 

Towards the definition of the formula $\Phi^\text{cst}_n$, we construct for each ultimately periodic word $uv^\omega$ in $\mathcal S$ a propositional formula $\Phi_n^{u,v}$ that tracks the satisfaction of the PSL
formula encoded by $\Phi^\text{cst}_n$ (and all its subformulas\slash subexpressions) on $uv^\omega$.
Each of these formulas is built over auxiliary variables:
\begin{itemize}
	\item $y^{u,v}_{i,k}$ with $0 \leq i< \abs{uv}$ and $k \in \{ 1, \ldots, n \}$
	\item $z^{u,v}_{i,j,k}$ with $0\leq i\leq j\leq \abs{uv^{b}}$, $b=2^{n}+1$ as in Lemma~\ref{lem:unrolling_words}, and $k \in \{ 1, \ldots, n \}$
\end{itemize}
The meaning of these variables is that $y^{u,v}_{i,k}$ is set to true if and only if $uv^\omega[i, \infty)$ satisfies the PSL formula rooted at Node~$k$ (if that node is labeled with a PSL operator);
similarly, $z^{u,v}_{i,j,k}$ is set to true if and only if $uv^\omega[i, j)$ matches the regular expression rooted at Node~$k$ (if that node is labeled with a regular expression operator).
Note that we have to create both the variables $y^{u,v}_{i,k}$ and $z^{u,v}_{i,j,k}$ for each node since the ``type'' of a node is determined dynamically during SAT solving.

It is left to enforce that the variables $y^{u,v}_{i,k}$ and $z^{u,v}_{i,j,k}$ have the desired meaning.
For the Boolean and temporal operators (except the triggers operator), we reuse the constraints proposed by~\cite{DBLP:conf/fmcad/NeiderG18}.
For instance, the constraint for the atomic propositions is 
\begin{align*}
    \bigwedge_{1 \leq k \leq n} \bigwedge_{p \in \ap} x_{k, p} \rightarrow \bigwedge_{0 \leq i< \abs{uv}}
    \begin{cases}
        y^{u,v}_{i,k} & \text{if $p\in uv[i]$; and} \\
        \lnot y^{u,v}_{i,k} & \text{if $p \notin uv[i]$.}
    \end{cases}
\end{align*}
Intuitively, this constraint states that if Node~$k$ is labeled with the atomic proposition $p \in \ap$, then the variables $y^{u,v}_{i,k}$ capture precisely the presence or absence of $p$ in the $k$-th position of the prefix $uv$.
Similarly, the constraint for the $\ltlnext$-operator is
\begin{multline*}
	\bigwedge_{\substack{1\leq k \leq n,~ 1\leq \ell< k}} [x_{k,\ltlnext} \wedge l_{k,\ell}]
	\rightarrow\\ 
	\Big[ \bigwedge_{0\leq i< \abs{uv}-1} \Big[ y^{u,v}_{i,k} \leftrightarrow y^{u,v}_{i+1,\ell} \Big] \Big] \land \Big[ y^{u,v}_{\abs{uv}-1,k}\leftrightarrow y^{u,v}_{\abs{u},\ell} \Big],
\end{multline*}
which states that if Node~$k$ is labeled with $\ltlnext$ and its left child is Node~$\ell$, then the satisfaction of the formula rooted at Node~$k$ at time $i$ (i.e., $y^{u,v}_{i,k}$) equals the satisfaction of the subformula rooted at Node~$\ell$ at time $i+1$ (i.e., $y^{u,v}_{i+1,\ell}$), except at time $|uv|-1$, where it ``wraps around'' to time $|u|$ (see Observation~\ref{obs:repeat}).

The constraints for regular expressions follow the definition of the matching relation $\matches$ and refer to the variables $z^{u,v}_{i,j,k}$ rather than $y^{u,v}_{i,k}$.
Exemplarily, we here present the constraints for the concatenation operator $\circ$:
\begin{multline*}
	\bigwedge_{\substack{1\leq k \leq n,~ 1\leq \ell, \ell^{\prime}<k}} [ x_{k,\circ} \land l_{k,\ell}\wedge r_{k,\ell^{\prime}} ] \rightarrow \\
	\bigwedge_{\substack{0\leq i\leq j \leq \abs{uv^b} }}\Big[z^{u, v}_{i,j,k} \leftrightarrow \bigvee_{i\leq t\leq j}z^{u, v}_{i,t,\ell}\land z^{u, v}_{t,j,\ell^{\prime}} \Big]
\end{multline*}
Constraints for the other regular operators are analogous.

Finally, the constraint below captures the semantics of the triggers operator $\triggers$
by relating the variables $y^{u,v}_{i,k}$ and $z^{u,v}_{i,j,k}$.
\begin{multline*}
    \bigwedge_{\substack{1\leq k \leq n,~ 1\leq \ell, \ell^{\prime}< k}} [ x_{k,\triggers} \land l_{k,\ell}\wedge r_{k,\ell^{\prime}} ] \rightarrow\\
    \bigwedge\limits_{0\leq i< \abs{uv}}\Big[y^{u,v}_{i,k}\leftrightarrow \bigwedge\limits_{i\leq j\leq \abs{uv^{b}}}\Big[z^{u,v}_{i,j,\ell}\rightarrow y^{u,v}_{M_{u, v}(j-1),\ell^{\prime}} \Big] \Big]
\end{multline*}

As the final step, we define the formula $\Phi^\text{cst}_n$ by
\begin{align*}
    \Phi^\text{cst}_n \coloneqq \Big[\bigwedge\limits_{uv^\omega\in P} \Phi_n^{u, v} \wedge y^{u, v}_{0,n} \Big] \land \Big[ \bigwedge_{uv^\omega\in N} \Phi_n^{u, v} \land \lnot y^{u, v}_{0,n} \Big],
\end{align*}
which enforces that all positive words in $\mathcal S$ satisfy the prospective PSL formula ($y^{u, v}_{0,n}$ has to be true), while all negative words violate it ($y^{u, v}_{0,n}$ has to be false).

\section{Evaluation}
\label{sec:evaluation}

We have implemented a prototype of our learning algorithm, named \fliePSL\ (Formal Language Inference Engine for PSL), which we will make publicly available.
This prototype is written in Python and uses Z3~\cite{DBLP:conf/tacas/MouraB08} as SAT solver.
Deviating slightly from the general algorithm presented in Section~\ref{sec:algorithm}, we have implemented the following improvement:
instead of generating the variables $y^{u,v}_{i,k}$ and $z^{u,v}_{i,j,k}$ for each node, we generate the latter variables (and their constraints) only for $0 \leq m < n$ nodes and the former variables (and their constraints) for the remaining $n-m$ nodes.
This effectively limits the size of a regular expression in the final PSL formula to $m$.
To obtain a complete algorithm, we iterate over all valid values for $m$ before increasing $n$.

To assess the performance of our prototype, we have compared it to an implementation of the LTL learning algorithm by \cite{DBLP:conf/fmcad/NeiderG18}, which we call \flieLTL\ for brevity.
To make this comparison as fair as possible, we have used two benchmark suites.
The first benchmark suite is taken directly from \cite{DBLP:conf/fmcad/NeiderG18} and contains $1217$ samples, which were generated from common LTL properties.
The second benchmark suite is meant to simulate real-world PSL use-cases and contains $390$ synthetic samples, which we have generated from PSL formulas that commonly appear in practice (e.g., $(p_1 \circ p_2)^\ast \triggers q$; see \cite{DBLP:series/icas/EisnerF06} for more examples).
Our procedure to generate these samples is similar to the one by \cite{DBLP:conf/fmcad/NeiderG18} and proceeds as follows:
firstly, we select a formula $\varphi$ from our pool of PSL formulas;
secondly, we generate up to $500$ ultimately periodic words $uv^\omega$ with $\abs{u}+\abs{v}\leq 15$;
thirdly, we partition these words into sets $P$ and $N$ depending on their satisfaction of $\varphi$. 
In total, the median size of the samples in the second benchmark suite is $100$ words.
All experiments were conducted on a single core of an Intel Xeon E7-8857~V2 CPU (at $3.6$\,GHz) with a timeout of $1800\,s$.

The two diagrams on the left-hand-side of Figure~\ref{fig:experiments} compare the runtime of \fliePSL\ and \flieLTL\ on the first and second benchmark suite, respectively.
In general, \fliePSL\ is moderately slower than \flieLTL\ and timed out $1.34$ times more often (\fliePSL\ timed out $38.4\%$ and $56.2\%$ of the times on the first and second benchmark suite, respectively, whereas \flieLTL\ timed out $24.8\%$ and $53.6\%$ of the times).
This came as a surprise to us because the SAT encoding in the case of PSL is much more involved than the one for LTL.
In fact, there were even $25$ benchmarks on which \fliePSL\ outperformed \flieLTL\ because it was able to learn smaller formulas. 

The two diagrams on the right-hand-side of Figure~\ref{fig:experiments} compare the size of the formulas learned by both tools.
On the first benchmark suite, we observe that \fliePSL\ mainly produced pure LTL formulas of the same size as \flieLTL\ (a likely explanation for this is that these benchmarks have explicitly been designed to capture LTL properties).
However, on $68$ benchmarks of the second suite, \fliePSL\ learned PSL formulas that use non-LTL operators and was able to recover the exact PSL property that was used to generate the sample in $40$ of the benchmarks.
Overall, \fliePSL\ learned a smaller formula than \flieLTL\ for $52$ benchmarks.

\section{Conclusion}
\label{sec:conclusion}

We have developed an algorithm for learning human-interpretable models expressed in PSL and have shown empirically that this algorithm infers interesting PSL formulas with only little overhead as compared to learning LTL formulas.

An interesting direction for future work would be to syntactically restrict the class of regular expressions so as to reduce the number $b$ of unrolling required for the variables $z^{u,v}_{i,j,k}$ and, hence, improve performance.
Moreover, we plan to extend our algorithm to be able to handle noisy data and, orthogonally, to learn models expressed as $\omega$-regular expressions.

\bibliographystyle{named}
\bibliography{bib}

\appendix
\clearpage

\section{Proofs regarding unrolling of words}\label{proof:unrolling_words}

In this section, we prove Lemma~\ref{lem:unrolling_words} used in Section~\ref{sec:algorithm}, which provides a bound for unrolling of ultimately periodic words to check consistency with triggers operator. 
We know that triggers operator uses regular expression and arguing about matching with a regular expression becomes easier when the regular expression is viewed as a finite state acceptor. 
As a result, in the proofs here, instead of  deriving the bound $b$ in terms of the size of the syntax DAG for $\rho\triggers\varphi$, we use the size $m$ of the minimal DFA (size refers to the number of states of the DFA) for $\rho$ to find the appropriate bound.
Nonetheless, the bound has to be related to the size of the syntax DAG and thus, we use a loose upper bound of $2^n$ for $m$ in Lemma~\ref{lem:unrolling_words}, since, size of minimal DFA can be exponentially larger than its regular expression. 
Tighter upper bounds for $m$ can be found in \cite{DBLP:journals/corr/GruberH14}.

\begin{lemma}\label{lem:bounding_regex_matches}
Let $uv^{\omega}[i,j)\matches \rho$, for some $i$, $j$ where, $i\leq\abs{u}< j$. Then, there exists $k\in\mathbb{N}$, $\abs{u}\leq k\leq\abs{u}+m\abs{v}$ such that $uv^\omega[i,k)\matches \rho$ and $k\equiv j \mod \abs{v}$.
\end{lemma}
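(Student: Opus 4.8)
The plan is to read $\rho$ as its minimal DFA $\mathcal{A}=(S,2^\ap,\delta,s_0,F)$ with $|S|=m$ and turn the statement into a pumping argument on the run of $\mathcal{A}$ over the periodic part of $uv^\omega$. First I would invoke the standard equivalence between the matching relation $\matches$ and language acceptance: for the extension $\hat\delta$ of $\delta$ to finite words, $w\matches\rho$ holds iff $\hat\delta(s_0,w)\in F$ for every finite word $w$ over $2^\ap$. Writing $\alpha=uv^\omega$ and decomposing $j=\abs u+c\abs v+r$ with $c\in\mathbb N$ and $0\le r<\abs v$ (possible since $j>\abs u$), the infix factors as $\alpha[i,j)=\alpha[i,\abs u)\cdot v^{c}\cdot v[0,r)$, and the hypothesis $uv^\omega[i,j)\matches\rho$ becomes $\hat\delta(s_0,\alpha[i,j))=\hat\delta(s_c,v[0,r))\in F$, where $s_t:=\hat\delta\bigl(s_0,\alpha[i,\abs u+t\abs v)\bigr)$.

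The core step is to pump down the number of copies of $v$. Since $\alpha[\abs u+t\abs v,\abs u+(t+1)\abs v)=v$, the sequence $(s_t)_{t\ge 0}$ is generated by iterating the single map $s\mapsto\hat\delta(s,v)$ on an $m$-element set, so by the pigeonhole principle it is ultimately periodic: there are $0\le t_1<t_2\le m$ with $s_{t_1}=s_{t_2}$, and with $d:=t_2-t_1\in\{1,\dots,m\}$ we get $s_t=s_{t'}$ whenever $t,t'\ge t_1$ and $t\equiv t'\pmod d$. If $j\le\abs u+m\abs v$ I just take $k=j$. Otherwise $\abs u+c\abs v+r>\abs u+m\abs v$ forces $c\ge m\ge t_1$, and I replace $c$ by $c':=t_1+\bigl((c-t_1)\bmod d\bigr)\in\{t_1,\dots,t_1+d-1\}\subseteq\{0,\dots,m-1\}$. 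Then $s_{c'}=s_c$ (both indices are $\ge t_1$ and congruent mod $d$), so $k:=\abs u+c'\abs v+r$ satisfies $\hat\delta(s_0,\alpha[i,k))=\hat\delta(s_{c'},v[0,r))\in F$, i.e.\ $uv^\omega[i,k)\matches\rho$; moreover $k-j=(c'-c)\abs v$ gives $k\equiv j\pmod{\abs v}$, and $\abs u\le k\le\abs u+(m-1)\abs v+(\abs v-1)<\abs u+m\abs v$.

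The argument is conceptually routine — it is a textbook pumping/pigeonhole argument — so the main obstacle is the bookkeeping rather than any real difficulty: one must keep the offset $r$ inside the period correctly, make sure the congruence $k\equiv j\pmod{\abs v}$ is preserved when shrinking the $v$-iteration count, and handle the boundary index $t_1$ (including the degenerate case $c'=t_1=0$, where $k=\abs u$ and acceptance of $\alpha[i,\abs u)$ is exactly $s_0=s_c\in F$). A secondary point I would state explicitly is the automaton characterization of $\matches$, since the paper defines matching inductively rather than via DFAs; this is standard (e.g.\ the Thompson/subset construction adapted to the alphabet $2^\ap$) and can be cited or sketched in one line. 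Lemma~\ref{lem:unrolling_words} then follows by combining this bound with Observation~\ref{obs:repeat} and the loose estimate $m\le 2^n$ for the size of the minimal DFA of $\rho$.
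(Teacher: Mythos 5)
Your proof is correct and takes essentially the same route as the paper's: both read $\rho$ as a DFA with $m$ states and pump down its run over the periodic part of $uv^\omega$ by pigeonhole, preserving the position within $v$ so that $k\equiv j \pmod{\abs{v}}$. The only difference is bookkeeping---the paper applies pigeonhole to $(\text{state},\text{index-in-}v)$ pairs along the full run of length greater than $m\abs{v}$, whereas you sample the state only at period boundaries ($m+1$ samples among $m$ states) and carry the offset $r$ separately, which is arguably tidier (modulo the harmless notational clash of $s_0$ denoting both the initial state and the state reached after reading $\alpha[i,\abs{u})$).
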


\begin{proof}
If $\abs{u}\leq j\leq \abs{u}+m\abs{v}$, we are done since we simply take $k=j$. 
However, if $j>\abs{u}+m\abs{v}$, finding the suitable $k$ is slightly more involved. 

The first observation we make is that, since $uv^{\omega}[i,j)\matches \rho$, there is an accepting run of the DFA $\mathcal{A}$ (of size $m$) for $\rho$ on $uv^{\omega}[i,j)$. 
Fig \ref{fig:lemma1} provides a pictorial depiction of the run. Notice that the portion of the run on $v^{\omega}[0,j)$ itself, has a length greater than $m\abs{v}$. 
We consider this portion of the run to be a sequence of tuples of the form $(\mathit{state}, \mathit{index})$, where $index$ refers to the position in $v$ which will be read next by the $\mathit{state}$ of the automaton. 
Now, due to pigeonhole principle, if this run is longer than $m\abs{v}$, then there exists a tuple which repeats during the run. Let $(q,l)$ be the tuple which repeats and let the run from the first occurrence of $(q,l)$ to the second, be referred to as $R$. 
Notice that due to the deterministic nature of the automaton, $R$ repeats during the rest of the run. Hence, if a final state $q_f$ occurs after $m\abs{v}$ steps, there must be a tuple $(q_f,l_f)$ which belongs to the run $R$. 
Clearly, $(q_f,l_f)$ must have been also visited during the first occurrence of $R$, which happens within the first $m\abs{v}$ steps of the entire run on $v^\omega$. 
Thus, we get a prefix $uv^\omega[i,k)\matches \rho$, where $k\leq \abs{u}+m\abs{v}$.
Moreover, $uv^\omega[i,j)$ and $uv^\omega[i,k)$ terminate at the same position in $v$, meaning $k\equiv j \mod \abs{v}$.
\end{proof}

\begin{figure}
    \centering
\begin{tikzpicture}[scale=1.25]

\draw [->, thick] (-3.8,0) -- (3,0);
\draw [<->,>=stealth] (-3.5,+0.4) -- node[above]{\small{length of run:} $m\abs{v}$} (0.5,0.4);

\foreach \i in {-3.65, -0.3, 1.2}
    \draw [thick] (\i,0.05) -- (\i,-0.05);
\node [below] at (-3.65,-0.05) {\tiny{$i$}};
\node [below] at (-0.3,-0.05) {\tiny{$k$}};
\node [below] at (1.2,-0.05) {\tiny{$j$}};

\node [above] at (-3.65,+0.05) {\small{$u$}};
\foreach \i in {-3.25,-2.75,-2.25,-1.75,-1.25,-0.75,...,2.25}
    \node [above] at (\i,+0.05) {\small{$v$}};
\node [above] at (2.75,+0.05) {\small{$\cdots$}};

\fill[black] (-3.8,0) circle (0.4 mm);
\foreach \i in {-3.5,-3.0,...,2.5}
    \fill[black] (\i,0) circle (0.4 mm);
    
\draw [->] (-3.8,-0.4) -- (3,-0.4);
\foreach \i in {-3.5,-3.4,-3.3,...,2.5}
    \draw (\i,-0.35) -- (\i,-0.45);
\foreach \i in {-3.5,-1.2,0.3,1.8,-0.3,1.2}
    \draw [thick] (\i,-0.32) -- (\i,-0.48);

\node [below, fill=Orchid, rounded corners=0.3cm, scale=0.8] at (-3.5,-0.5) {$\icol{q_0\\0}$};
\node [below, fill=ProcessBlue, rounded corners=0.3cm, scale=0.8] at (-1.2,-0.5) {$\icol{q\\l}$};
\node [below, fill=YellowGreen, rounded corners=0.3cm, scale=0.8] at (-0.3,-0.5) {$\icol{q_f\\l_f}$};
\node [below, fill=ProcessBlue, rounded corners=0.3cm, scale=0.8] at (0.3,-0.5) {$\icol{q\\l}$};
\node [below, fill=YellowGreen, rounded corners=0.3cm, scale=0.8] at (1.2,-0.5) {$\icol{q_f\\l_f}$};
\node [below, fill=ProcessBlue, rounded corners=0.3cm, scale=0.8] at (1.8,-0.5) {$\icol{q\\l}$};

\draw[decoration={brace,amplitude=6pt, raise=5pt, mirror},decorate]
  (-1.2,-1.1) -- node[below=10pt] {$R$} (0.3,-1.1);
\draw[decoration={brace,amplitude=6pt, raise=5pt, mirror},decorate]
  (0.3,-1.1) -- node[below=10pt] {$R$} (1.8,-1.1);

\end{tikzpicture}

    \caption{The run of the DFA for $\rho$ on $uv^\omega[i,j)$, where $R$ is the repeating run staring at $(q,l)$. The first occurrence of $R$ happens within (and including) first $m\abs{v}$ steps of the run on $v^\omega$ and hence, the first occurrence of state $q_f$ also happens within that portion as well.}
    \label{fig:lemma1}
\end{figure}
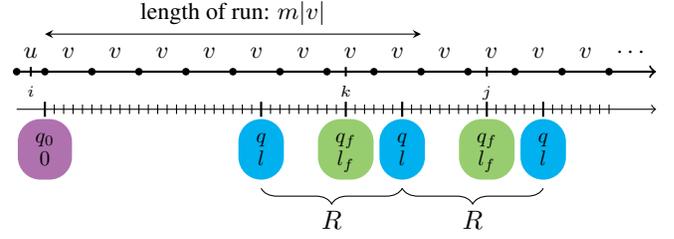

\begin{lemma}\label{lem:unrolling_words_general}
Let $b = m+1$. Then, we have $uv^\omega[i,\infty)\modelspsl  \rho\triggers \varphi$, where $0
\leq i\leq \abs{uv}-1$,  if and only if for all $j<\abs{u}+b\abs{v},\ uv^\omega[i,j)\matches \rho$  implies $uv^\omega[j-1,\infty) \modelspsl \varphi$.
\end{lemma}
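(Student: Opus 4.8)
For the ``only if'' direction I would simply unwind the semantics of $\triggers$: applying it to the word $uv^\omega[i,\infty)$ states exactly that $uv^\omega[i,j)\matches\rho$ implies $uv^\omega[j-1,\infty)\modelspsl\varphi$ for \emph{every} $j>i$, so in particular for those $j$ with $j<\abs{u}+b\abs{v}$. For the ``if'' direction, assume the right-hand condition and fix an arbitrary $j>i$ with $uv^\omega[i,j)\matches\rho$; the goal is $uv^\omega[j-1,\infty)\modelspsl\varphi$. If $j<\abs{u}+b\abs{v}$ this is the hypothesis, so the only work is the case $j\geq\abs{u}+b\abs{v}=\abs{u}+(m+1)\abs{v}$, where in particular $j-\abs{u}>m\abs{v}$ and $j-1\geq\abs{u}$.

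The plan is to replace the ``far away'' matching position $j$ by a ``nearby'' one inside the unrolled prefix. Concretely, I would invoke Lemma~\ref{lem:bounding_regex_matches} (which pigeonholes the run of the minimal DFA for $\rho$ along the periodic part and deletes a loop) to produce a position $k$ with $\abs{u}\leq k\leq\abs{u}+m\abs{v}$, $uv^\omega[i,k)\matches\rho$, and $k\equiv j\pmod{\abs{v}}$. Since $m\abs{v}<(m+1)\abs{v}$ this $k$ satisfies $i<k<\abs{u}+b\abs{v}$, so the hypothesis gives $uv^\omega[k-1,\infty)\modelspsl\varphi$. Finally, $k\equiv j\pmod{\abs{v}}$ yields $j-1\equiv k-1\pmod{\abs{v}}$, and both $j-1$ and $k-1$ are $\geq\abs{u}$, so Observation~\ref{obs:repeat} gives $uv^\omega[j-1,\infty)=uv^\omega[k-1,\infty)$, hence $uv^\omega[j-1,\infty)\modelspsl\varphi$, completing the argument.

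The one delicate point—and the reason for the slack ``$+1$'' in $b=m+1$—is ensuring the nearby position is \emph{strictly} inside the periodic part, i.e.\ $\abs{u}<k$, so that $k-1\geq\abs{u}$ and Observation~\ref{obs:repeat} actually applies at $k-1$; Lemma~\ref{lem:bounding_regex_matches} only guarantees $k\geq\abs{u}$. For the boundary value $k=\abs{u}$ (which, note, forces $j\equiv\abs{u}\pmod{\abs{v}}$) I would re-examine the accepting run of the DFA on $uv^\omega[i,j)$ from the proof of Lemma~\ref{lem:bounding_regex_matches}: the accepting state reached at position $j$ already occurs at position $\abs{u}$ and therefore lies on a pumpable loop $R$ starting at the first periodic position, whose length $\abs{R}$ is a positive multiple of $\abs{v}$ with $\abs{R}\leq m\abs{v}$; taking $k:=\abs{u}+\abs{R}$ gives another accepting position with $\abs{u}<k\leq\abs{u}+m\abs{v}<\abs{u}+b\abs{v}$ and $k\equiv j\pmod{\abs{v}}$, so the main argument goes through. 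The symmetric situation $\abs{u}<i\leq\abs{uv}-1$, in which $uv^\omega[i,\infty)$ is itself purely periodic of period $\abs{v}$, is handled by running the same reduction on that suffix; one just checks the matching position it returns still lies below $\abs{u}+b\abs{v}$. I expect this boundary bookkeeping to be the only real obstacle—everything else is a direct unwinding of the semantics together with the already-established Lemma~\ref{lem:bounding_regex_matches} and Observation~\ref{obs:repeat}.
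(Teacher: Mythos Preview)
Your argument follows the same route as the paper: the forward direction is immediate from the semantics of $\triggers$, and the backward direction invokes Lemma~\ref{lem:bounding_regex_matches} to pull a far-away match back into the bounded prefix and then transfers satisfaction via the periodicity of Observation~\ref{obs:repeat}. The paper's proof is much terser---it calls the backward direction ``a direct consequence of Lemma~\ref{lem:bounding_regex_matches}'' without spelling out Observation~\ref{obs:repeat} or the $k=\abs{u}$ edge case you worry about---so your write-up is in fact more careful than the original on that point.

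The one place you differ from the paper is in explaining the slack ``$+1$'' in $b=m+1$. You attribute it to ensuring $k>\abs{u}$ so that Observation~\ref{obs:repeat} applies at $k-1$; but note that your own fix for the $k=\abs{u}$ boundary (taking $k':=\abs{u}+\abs{R}$ with $0<\abs{R}\leq m\abs{v}$) already lands within $\abs{u}+m\abs{v}$ and does not itself consume the extra $\abs{v}$. The paper's stated reason is instead the case $\abs{u}<i\leq\abs{uv}-1$: since Lemma~\ref{lem:bounding_regex_matches} assumes $i\leq\abs{u}$, one re-runs its pigeonhole argument with one extra copy of $v$ absorbed into the initial segment (the paper's ``considering $u=v$''), which shifts the resulting bound from $\abs{u}+m\abs{v}$ to $\abs{uv}+m\abs{v}=\abs{u}+(m+1)\abs{v}$. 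You do cover this case in your last sentence, so your proof is complete; you just locate the need for the ``$+1$'' differently than the paper does.
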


\begin{proof}
The forward direction of the theorem follows from the semantics of triggers operator.

The other direction is a direct consequence of Lemma~\ref{lem:bounding_regex_matches}. The additional $\abs{v}$ term in the bound that appears in the theorem because of the fact that here, $i$ could range between $0$ and $\abs{uv}-1$, unlike in Lemma~{\ref{lem:bounding_regex_matches}}. When $i>\abs{u}$, similar argument as in the lemma works just by considering $u=v$.
\end{proof}

As evident, Lemma~\ref{lem:unrolling_words_general} provides an upper bound on the number of unrollings of $uv^\omega$ required to check consistency for triggers operator, in terms of $m$. This result holds for any upper bound of $m$, as discussed at the beginning of the section. Therefore, we derive Lemma~\ref{lem:unrolling_words} from
Lemma~\ref{lem:unrolling_words_general} (along with Observation~\ref{obs:repeat} to construct the function $M_{u,v}$), with a suitable upper bound of $m\leq 2^n$ in terms of size of the syntax DAG of the formula.

\section{List of all constraints used in the SAT encoding}
In this section, we have listed down all the constraints (in Figure~\ref{fig:all-constraints}) that have been used to construct $\Phi^\mathcal{S}_n$ appearing in Algorithm~\ref{alg:sat-learner}. We have partitioned the constraints into three different tables depending on the type of the constraint.

The first table provides the structural constraints used for encoding of the syntax DAG. 
In particular, Formulas~\ref{eq:label_uniqueness},~\ref{eq:left_child_uniqueness}, and~\ref{eq:right_child_uniqueness} ensure uniqueness of the label, left child, and right child of a node respectively. On the other hand, Formulas~\ref{eq:regex_ordering},~\ref{eq:LTL_ordering}, and~\ref{eq:triggers_ordering} are the ordering constraints for regular operators, LTL operators, and triggers operator respectively. 
Finally, Formula~\ref{eq:prop_ordering} asserts that the Node~$1$ is either $\varepsilon$ or a propositional variable.

Rest of the constraints track the consistency of the sample with the subformulas and subexpressions of the guessed formula. The second table, consisting of Formulas~\ref{eq:epsilon} to ~\ref{eq:star}, provides constraints for consistency for regular expressions. The third table provides the constraints for propositional variables and rest of the PSL operators. The constraints for the LTL operators appearing in the third table have been taken from~\cite{DBLP:conf/fmcad/NeiderG18}.

\begin{figure*}[h]
\centering
\begin{tcolorbox}[width=0.7\textwidth, title=Structural Constraints, colback=white, sharp corners]
\scriptsize 
\begin{align}
\Big[ \bigwedge_{1\leq k \leq n} \bigvee_{\lambda \in \Lambda_P} x_{k, \lambda} \Big] 
&\land \Big[ \bigwedge_{1 \leq k \leq n} \bigwedge_{\lambda \neq \lambda' \in \Lambda_P} \lnot x_{k, \lambda} \lor \lnot x_{k, \lambda'} \Big]\label{eq:label_uniqueness}\\    
[\bigwedge\limits_{2\leq k\leq  n} \bigvee\limits_{1\leq \ell\leq k}l_{k,\ell}]
&\wedge[\bigwedge\limits_{2\leq k\leq  n}\bigwedge\limits_{1\leq \ell\leq \ell^\prime\leq n}\neg l_{k,\ell}\vee \neg l_{k,\ell^\prime}]\label{eq:left_child_uniqueness}\\
[\bigwedge\limits_{2\leq k\leq  n} \bigvee\limits_{1\leq \ell\leq k}r_{k,\ell}]
&\wedge[\bigwedge\limits_{2\leq k\leq  n}\bigwedge\limits_{1\leq \ell\leq \ell^\prime\leq n}\neg r_{k,\ell}\vee \neg r_{k,\ell^\prime}]\label{eq:right_child_uniqueness}\\
\bigwedge_{\substack{2 \leq k \leq n, 1 \leq \ell, \ell^\prime< k\\ \lambda\in\{+, \ast, \circ\}}} [ x_{k, \lambda} \land l_{k, \ell} \land r_{k,\ell^\prime} ] 
&\rightarrow \Big[ \bigvee_{\lambda^\prime \in \Lambda_R} x_{\ell, \lambda^\prime} \land \bigvee_{\lambda^\prime\in\Lambda_R} x_{\ell^{\prime}, \lambda'} \Big]\label{eq:regex_ordering}\\
\bigwedge_{\substack{2 \leq k \leq n, 1 \leq \ell, \ell^\prime< k\\ \lambda\in\{\ltlnext, \ltluntil, \neg, \lor \}}} [ x_{k, \lambda} \land l_{k, \ell} \land r_{k,\ell^\prime} ] 
&\rightarrow \Big[ \bigvee_{\lambda^\prime \in \Lambda_P} x_{\ell, \lambda^\prime} \land \bigvee_{\lambda^\prime\in\Lambda_P} x_{\ell^{\prime}, \lambda^\prime} \Big]\label{eq:LTL_ordering}\\
\bigwedge_{\substack{2 \leq k \leq n, 1 \leq \ell, \ell^\prime< k}} [ x_{k, \triggers} \land l_{k, \ell} \land r_{k,\ell^\prime} ] 
&\rightarrow \Big[ \bigvee_{\lambda^\prime \in \Lambda_R} x_{\ell, \lambda^\prime} \land \bigvee_{\lambda^\prime\in\Lambda_P} x_{\ell^{\prime}, \lambda^\prime} \Big]\label{eq:triggers_ordering}\\
x_{1,\epsilon}&\vee \bigvee\limits_{p\in\ap} x_{1, p}\label{eq:prop_ordering}
\end{align}
\end{tcolorbox}

\begin{tcolorbox}[width=0.7\textwidth, title=Constraints for Regular Expressions, colback=white, sharp corners]
\scriptsize
\begin{align}
\bigwedge\limits_{1\leq k\leq n}x_{k,\varepsilon}
&\rightarrow\Big[\bigwedge\limits_{0\leq i\leq j\leq \abs{uv^b} }z^{u,v}_{i,j,k}\leftrightarrow [i=j]\Big]\label{eq:epsilon}\\
\bigwedge\limits_{1\leq k\leq n}\bigwedge\limits_{p\in\ap}x_{k,p}
&\rightarrow\Big[\bigwedge\limits_{0\leq i\leq j\leq \abs{uv^b} }
\begin{cases}
z^{u,v}_{i,j,k}\text{ if }p\in uv^b[i,j)\\
\neg z^{u,v}_{i,j,k}\text{ if }p\not\in uv^b[i,j)
\end{cases}\Big]\label{eq:regex_prop}\\
\bigwedge\limits_{\substack{1\leq k\leq n \\ 1\leq \ell, \ell^\prime< k}}x_{k,+}\wedge l_{k,\ell}\wedge r_{k,\ell^\prime}
&\rightarrow\Big[\bigwedge\limits_{\substack{0\leq i \leq j \leq \abs{uv^b}}}\Big[z^{u,v}_{i,j,k}\leftrightarrow z^{u,v}_{i,j,\ell}\vee z^{u,v}_{i,j,\ell^\prime}\Big]\Big]\label{eq:union}\\
\bigwedge\limits_{\substack{1\leq k\leq n \\ 1\leq \ell, \ell^\prime< k}}x_{k,\circ}\wedge l_{k,\ell}\wedge r_{k,\ell^\prime}
&\rightarrow\Big[\bigwedge\limits_{\substack{0\leq i\leq j \leq \abs{uv^b} }}\Big[z^{u,v}_{i,j,k}\leftrightarrow \bigvee\limits_{i\leq t\leq j}z^{u,v}_{i,t,\ell}\wedge z^{u,v}_{t,j,\ell^\prime}\Big]\Big]\label{eq:concat} \\
\bigwedge\limits_{\substack{1\leq k\leq n \\ 1\leq \ell< k}}x_{k,\ast}\wedge l_{k,\ell}
&\rightarrow\Big[\bigwedge\limits_{\substack{0\leq i \leq j \leq \abs{uv^b}}}\Big[z^{u,v}_{i,j,k}\leftrightarrow [i=j]\vee \bigvee\limits_{i< t\leq j}z^{u,v}_{i,t,\ell}\wedge z^{u,v}_{t,j,k}\Big]\Big] \label{eq:star}
\end{align}
\end{tcolorbox}

\begin{tcolorbox}[width=0.7\textwidth, title=Constraints for LTL and triggers operators, colback=white, sharp corners]
\scriptsize
\begin{align}
\bigwedge\limits_{1\leq k \leq n}\bigwedge\limits_{p\in \ap}x_{k,p}
&\rightarrow\Big[\bigwedge\limits_{0\leq i< \abs{uv}}
\begin{cases}
y^{u,v}_{i,k}\text{ if }p\in uv[i,i) \\
\neg y^{u,v}_{i,k}\text{ if }p\not\in uv[i,i)
\end{cases}\Big]\label{eq:ltl_prop}\\
\bigwedge\limits_{\substack{1\leq k \leq n \\ 1\leq \ell\leq k}}x_{k,\neg}\wedge l_{k,\ell}
&\rightarrow\Big[\bigwedge\limits_{\substack{0\leq i < \abs{uv}}}\Big[y^{u,v}_{i,p}\leftrightarrow \neg y^{u,v}_{i,q}\Big]\Big] \label{eq:not}\\
\bigwedge\limits_{\substack{1\leq k \leq n \\ 1\leq \ell, \ell^\prime< k}}x_{k,\vee}\wedge l_{k,\ell}\wedge r_{k,\ell^\prime}&\rightarrow\Big[\bigwedge\limits_{\substack{0\leq i < \abs{uv}}}\Big[y^{u,v}_{i,k}\leftrightarrow y^{u,v}_{i,\ell}\vee y^{u,v}_{i,j,\ell^\prime}\Big]\Big]\label{eq:or}\\
\bigwedge\limits_{\substack{1\leq k \leq n \\ 1\leq \ell< k}}x_{k,\ltlnext}\wedge l_{k,\ell} 
&\rightarrow\Big[\bigwedge\limits_{\substack{0\leq i< \abs{uv}-1}}\Big[y^{u,v}_{i,k}\leftrightarrow y^{u,v}_{i+1,\ell}\Big]\wedge \Big[y^{u,v}_{\abs{uv}-1,k}\leftrightarrow y^{u,v}_{\abs{u},\ell}\Big]\Big]\label{eq:next}\\
\bigwedge\limits_{\substack{1\leq k \leq n \\ 1\leq \ell, \ell^\prime< k}}x_{k,\ltluntil}\wedge l_{k,\ell}\wedge r_{k,\ell^\prime}
&\rightarrow\Big[\bigwedge\limits_{\substack{0\leq i< \abs{u}}}\Big[y^{u,v}_{i,k}\leftrightarrow \bigvee\limits_{i\leq j< \abs{uv}}\Big[y^{u,v}_{j,\ell^\prime}\wedge  \bigwedge\limits_{i\leq t< j} y^{u,v}_{t,\ell}\Big]\Big]\nonumber\\
&\wedge \Big[\bigwedge\limits_{\substack{\abs{u}\leq i< \abs{uv}}}\Big[y^{u,v}_{i,k}\leftrightarrow \bigvee\limits_{\abs{u}\leq j< \abs{uv}}\Big[y^{u,v}_{j,\ell^\prime}\wedge  \bigwedge\limits_{k\in\mathcal{I}^{u,v}(i,j)} y^{u,v}_{t,\ell}\Big]\Big]\label{eq:until}\\
\text{\textit{where, }}&\mathcal{I}_{u,v}(i,j)=
\begin{cases}
\{i,\cdots, j-1\} &\text{ for } i\leq j \\
\{\abs{u},\cdots,j-1\}\cup\{i,\cdots, \abs{uv-1}\} &\text{ for } i>j
\end{cases}\nonumber\\
\bigwedge\limits_{\substack{1\leq k \leq n \\ 1\leq \ell, \ell^\prime< k}}x_{k,\triggers}\wedge l_{k,\ell}\wedge r_{k,\ell^\prime} 
&\rightarrow \Big[\bigwedge\limits_{0\leq i< \abs{uv}}\Big[y^{u,v}_{i,k}\leftrightarrow \bigwedge\limits_{i\leq j\leq \abs{uv^{b}}}\Big[z^{u,v}_{i,j,\ell}\rightarrow y^{u,v}_{M_{u,v}(j-1),\ell^\prime} \Big]\Big]\Big]\label{eq:triggers}
\end{align}
\end{tcolorbox}
\caption{List of all constraints used in the SAT encoding}
\label{fig:all-constraints}
\end{figure*}
\FloatBarrier

\section{Correctness of the Learning Algorithm}
\label{app:correctness-learning-algorithm}

In order to prove Theorem~\ref{thm:correctness-learning-algorithm}, we show the following lemma, which establishes that the formula $\Phi_n^\mathcal{S}$ indeed has the desired properties.

\begin{lemma} \label{lem:correctness-propositional-formula}
Let $\mathcal{S} = (P, N)$ be a sample, $n \in \mathbb N \setminus \{ 0 \}$, and $\Phi_n^\mathcal S$ be the propositional formula used in Algorithm~\ref{alg:sat-learner}.
Then, the following holds:
\begin{enumerate}
	\item If there exists a PSL formula $\varphi^{\mathcal S}$ of size $n$, that is consistent with $\mathcal S$, then the propositional formula $\Phi_n^\mathcal S$ is satisfiable.
	\item If $V \models \Phi_n^\mathcal S$, then $\varphi^{V}$ is a PSL formula of size $n$ that is consistent with $\mathcal S$.
\end{enumerate}
\end{lemma}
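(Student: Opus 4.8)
The plan is to prove the two directions of Lemma~\ref{lem:correctness-propositional-formula} essentially by exhibiting the natural bijection between syntax DAGs of size $n$ and models of $\Phi^\text{str}_n$, and then showing that the consistency constraints $\Phi^\text{cst}_n$ faithfully evaluate the encoded formula on each word of the sample. Throughout, I write $\Phi_n^\mathcal{S} = \Phi^\text{str}_n \wedge \Phi^\text{cst}_n$.

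For direction~1, suppose $\varphi^\mathcal{S}$ of size $n$ is consistent with $\mathcal{S}$. First I would build its syntax DAG (merging identical subformulas\slash subexpressions), which by definition of size has exactly $n$ nodes, and fix an indexing $1,\ldots,n$ respecting the root-is-$n$ and children-have-smaller-index conventions; such an indexing exists by a reverse topological sort, and Node~$1$ is necessarily a leaf labelled by an atomic proposition or $\varepsilon$. This indexing defines a candidate interpretation $V$: set $x_{k,\lambda}$, $l_{k,\ell}$, $r_{k,\ell}$ according to the DAG, and set $y^{u,v}_{i,k}$, $z^{u,v}_{i,j,k}$ to the \emph{true} semantic values, i.e.\ $y^{u,v}_{i,k}=1$ iff $uv^\omega[i,\infty)\modelspsl \varphi_k$ (the subformula at Node~$k$) and $z^{u,v}_{i,j,k}=1$ iff $uv^b[i,j)\matches \rho_k$. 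Then I would verify, constraint by constraint, that $V\models\Phi^\text{str}_n$ (the uniqueness and ordering constraints hold because the DAG is well-formed and type-correct) and $V\models\Phi^\text{cst}_n$. The latter splits into: the Boolean/temporal constraints, which hold by the LTL semantics exactly as in \cite{DBLP:conf/fmcad/NeiderG18} (using Observation~\ref{obs:repeat} for the ``wrap-around'' cases of $\ltlnext$ and $\ltluntil$); the regular-expression constraints \eqref{eq:epsilon}--\eqref{eq:star}, which hold by unwinding the definition of $\matches$; and the triggers constraint \eqref{eq:triggers}, which is exactly Lemma~\ref{lem:unrolling_words} restated in propositional form (this is why $b=2^n+1$ is the right unrolling bound). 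Finally, since $\varphi^\mathcal{S}$ is consistent with $\mathcal{S}$, we have $uv^\omega[0,\infty)\modelspsl\varphi^\mathcal{S}$ for $uv^\omega\in P$ and the negation for $uv^\omega\in N$, so $V$ also satisfies the top-level literals $y^{u,v}_{0,n}$ resp.\ $\lnot y^{u,v}_{0,n}$ in the definition of $\Phi^\text{cst}_n$. Hence $V\models\Phi_n^\mathcal{S}$.

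For direction~2, suppose $V\models\Phi_n^\mathcal{S}$. Since $V\models\Phi^\text{str}_n$, the uniqueness constraints guarantee each node has exactly one label and (for $k\geq 2$) one left and one right child index, and the ordering constraints guarantee type-correctness, so the recipe ``label Node~$k$ by the unique $\lambda$ with $V(x_{k,\lambda})=1$, attach children as given by $V(l_{k,\ell}), V(r_{k,\ell})$, root at $n$'' yields a genuine well-formed PSL syntax DAG with $n$ nodes, whose induced formula is $\varphi^V$ of size $n$. Then I would prove by induction on $k$ (using that children have strictly smaller index) that for every $uv^\omega\in P\cup N$: if Node~$k$ carries a PSL operator then $V(y^{u,v}_{i,k})=1 \iff uv^\omega[i,\infty)\modelspsl\varphi_k$ for all $i<\abs{uv}$, and if Node~$k$ carries a regular operator then $V(z^{u,v}_{i,j,k})=1 \iff uv^b[i,j)\matches\rho_k$ for all $0\le i\le j\le\abs{uv^b}$. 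The base case is the atomic-proposition\slash$\varepsilon$ constraints; the inductive step invokes the corresponding constraint for each operator together with Observation~\ref{obs:repeat} (for $\ltlnext$, $\ltluntil$) and Lemma~\ref{lem:unrolling_words} (for $\triggers$, where the constraint's use of $M_{u,v}(j-1)$ and the bound $b$ exactly match the lemma). In particular the top variable gives $V(y^{u,v}_{0,n})=1 \iff uv^\omega\modelspsl\varphi^V$, and combined with the literals $y^{u,v}_{0,n}$ (for $P$) and $\lnot y^{u,v}_{0,n}$ (for $N$) forced by $\Phi^\text{cst}_n$, we conclude $\varphi^V$ is consistent with $\mathcal{S}$.

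The main obstacle is the inductive step for the triggers operator: making precise that constraint~\eqref{eq:triggers}, which only quantifies $j$ up to $\abs{uv^b}$ and redirects continuation via $M_{u,v}(j-1)$, correctly captures the genuine semantics $\alpha[0,i)\matches\rho \Rightarrow \alpha[i-1,\infty)\modelspsl\varphi$ which a priori quantifies over \emph{all} match positions on the infinite word. This is precisely the content of Lemma~\ref{lem:unrolling_words} (the finite-unrolling bound $b=2^n+1$), so the work is mostly in lining up indices — that $z^{u,v}_{i,j,\ell}$ over the prefix $uv^b$ captures matches of $\rho$, that $M_{u,v}(j-1)$ sends the post-match position back into the prefix $uv$ where $y^{u,v}_{\cdot,\ell'}$ lives, and that no match beyond $\abs{uv^b}$ is missed. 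Once that alignment is stated carefully, the remaining cases are the routine ``unwind the definition'' verifications, and I would state them tersely, referring to \cite{DBLP:conf/fmcad/NeiderG18} for the LTL fragment. Theorem~\ref{thm:correctness-learning-algorithm} then follows immediately: Algorithm~\ref{alg:sat-learner} increments $n$, terminating because the trivial consistent formula from Section~\ref{sec:problem} bounds the least feasible $n$, and Lemma~\ref{lem:correctness-propositional-formula} guarantees the returned $\varphi_v$ is consistent and of minimal size.
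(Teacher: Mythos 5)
Your proposal is correct and follows essentially the same route as the paper's proof: direction~1 by reading off a satisfying assignment from the syntax DAG of $\varphi^{\mathcal S}$ with the semantic values of $y^{u,v}_{i,k}$ and $z^{u,v}_{i,j,k}$, and direction~2 by extracting the DAG from the model and proving by induction (on node index, equivalently on formula structure) that these variables carry their intended meaning, with Lemma~\ref{lem:unrolling_words} discharging the triggers case. You also correctly isolate the only genuinely delicate step, namely the alignment of the finite unrolling bound $b$ and the map $M_{u,v}$ with the infinite-word semantics of $\triggers$, which is exactly where the paper invokes that lemma.
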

\begin{proof}

For proving the first statement, we use the syntax DAG of the formula $\varphi^{\mathcal S}$ (indexed using $1\ldots n$), to formulate a valuation $V$ for the propositional variables in $\Phi^{\mathcal S}_n$. We use $\varphi_k^{\mathcal S}$ to refer to the subformula rooted at the Node~$k$. Alternatively, we use $\rho_k^{\mathcal S}$ for the subexpression at Node~$k$. 
    \begin{itemize}
    \item We set $V(x_{k,\lambda})=1$ if and only if the Node~$k$ node is labeled by $\lambda$.   
    \item We set $V(l_{k,\ell})=1$ if and only if Node~$\ell$ is the left child of the Node~$k$ and also, set $V(r_{k,\ell})=1$ similarly for the right child.
    \item We set $V(y^{u,v}_{i,k})=1$ if and only if $uv^\omega[i,\infty)\modelspsl \varphi_k^{\mathcal S}$, when label at Node~$k$ is in $\Lambda_P$, but is not an operator from regular expressions.
    \item We set $V(z^{u,v}_{i,j,k})=1$ if and only if $uv^b[i,j)\matches \rho_k^{\mathcal S}$, when label at Node~$k$ is in $\Lambda_R$. 
    \end{itemize}
    
Firstly, it can be seen that $V\models \Phi_n^{\text{str}}$, since the formulated valuation ensures the uniqueness of the labels of nodes, as well as that of their left and right children. 
The ordering constraints are also satisfied, since $\varphi^{\mathcal S}$ is a valid PSL formula. 
Further, $V\models \Phi^{u,v}_n$ for all $uv^\omega\in P\cup N$, since, the values of the variables
$z^{u,v}_{i,j,k}$ correspond to the matching of subexpressions $\rho_k^{\mathcal S}$ with $uv^b[i,j)$ for regular operators and atomic expressions; while the values of the variables $y^{u,v}_{i,k}$ correspond to the satisfaction of subformulas $\varphi_k^{\mathcal S}$ on $uv^\omega[i,\infty)$ for propositions and rest of the PSL operators. 
Finally, the fact that $\varphi^{\mathcal S}$ is consistent with $\mathcal S$ implies $V\models y^{u,v}_{0,n}$ for each word in $P$ and $V\models\neg y^{u,v}_{0,n}$ for each word in $N$. This proves $V\models \Phi^{\mathcal S}$.

In the second statement, observe that, since, $V\models \Phi^S_n$, we have $V\models \Phi^{\text{str}}_n$ as well. 
Hence, the valuation of the variables $x_{k,\lambda}$, $l_{k,\ell}$, and $r_{k,\ell}$ encode a syntax DAG from which we obtain the PSL formula $\varphi^V$. 
Additionally, the ordering constraints ensure proper ordering of the operators.
Next, we define $\varphi_k^V$, to be the subformula of $\varphi^V$ rooted at the Node~$k$, if the node is labelled by propositions, LTL or triggers operator while $\rho_k^V$ to be the subexpression at Node~$k$ for regular operators or atomic expressions. Now, it needs to be shown that $\varphi^V$ is indeed consistent with the sample $\mathcal S$. 
To this end, we show ${V(y^{u,v}_{i,k})=1}$ if and only if ${uv^\omega[i,\infty)\matches \varphi^V_k}$ for any $i\in \{0,\cdots, \abs{uv}-1\}$ for the subformulas; and ${V(z^{u,v}_{i,j,k})=1}$ if and only if ${uv^b[i,j)\matches \rho^V_k}$ for any $i, j\in \{0,\cdots, \abs{uv^b}\}$ for the subexpressions. This proof proceeds via induction on the structure of $\varphi^V$. 

For proving the base cases (that is for $\varepsilon$ and propositional variables), we use the constraints for '$\varepsilon$' and propositions (presented in the Figure~\ref{fig:all-constraints}) followed by the semantics of PSL formulas. The induction on the operators in PSL proceeds similarly, except that here, we use the inductive hypothesis for subformulas (or subexpressions) of smaller size, to relate the information derived from the appropriate constraints, to the semantics of PSL. The proofs for the different cases is presented below.

\begin{itemize}
	\item In the \textit{base} case $\rho^V_k=\varepsilon$, we have $V(x_{k,\epsilon})$ set to 1, and thus, we make the following deductions:
    \begin{align*}
    V(z^{u,v}_{i,j,k})=1&\iff i=j\\
                    &\iff uv^b[i,j)\matches\varepsilon
    \end{align*}
    \item In the \textit{base} case $\varphi^V_k=p$ or $\rho^V_k=p$, we have $V(x_{k,p})$ set to 1, and thus, we can make the following deductions:
    \begin{align*}
    V(y^{u,v}_{i,k})=1&\iff p\in uv^\omega[i]\\
                    &\iff uv^\omega[i,j)\modelspsl p\\        
    V(z^{u,v}_{i,j,k})=1&\iff p\in uv^\omega[i,i)\\
                    &\iff uv^b[i,j)\matches p
    \end{align*}
    \item In the case $\rho^V_k=\rho^V_\ell+\rho^V_{\ell^\prime}$, we have $V(x_{k,+})$, and $V(l_{k,\ell})$, $V(r_{k,\ell^\prime})$ set to 1, and thus, we make the following deductions:
    \begin{align*}
    V(z^{u,v}_{i,j,k})=1 &\iff V(z^{u,v}_{i,j,\ell})=1 \text{ or } V(z^{u,v}_{i,j,\ell^\prime})=1\\
    &\iff uv^b[i,j)\matches \rho^V_\ell\text{ or } uv^b[i,j)\matches \rho^V_{\ell^\prime}\\
    &\iff uv^b[i,j)\matches \rho^V_\ell+\rho^V_{\ell^\prime}
    \end{align*}
    \item In the case $\rho^V_k=\rho^V_\ell\circ \rho^V_{\ell^\prime}$, we have $V(x_{k,\circ})$, $V(l_{k,\ell})$, and $V(r_{k,\ell^\prime})$ all set to 1, and thus, we make the following deductions:
    \begin{multline*}
    V(z^{u,v}_{i,j,k})=1 \\
    \begin{aligned}
    &\iff \exists t\in \mathbb{N},\ 1\leq t\leq \abs{uv^b}, \\
    &\hspace{2cm} V(y^{u,v}_{i,t,\ell})=1 \text{, and } V(y^{u,v}_{t,j,q^{\prime}})=1\\
    &\iff \exists t\in \mathbb{N},\ 1\leq t\leq \abs{uv^b}, \\
    &\hspace{2cm} uv^b[i,t)\matches \rho^V_\ell \text{, and } uv^b[k,j)\matches \rho^V_{\ell^\prime}\\
    &\iff uv^b[i,j)\matches \rho^V_\ell\circ \rho^V_{\ell^\prime}
    \end{aligned}    
    \end{multline*}   
    \item In the case $\rho^V_k=(\rho^V_\ell)^{*}$, we have $V(x_{k,*})$ and $V(l_{k,\ell})$ set to 1, and thus, we make the following deductions:
    \begin{multline*}
    V(z^{u,v}_{i,j,k})=1\\
    \begin{aligned}
    &\iff 
    \begin{cases} 
    i=j; \text{ or}\\ 
    \exists t\in \mathbb N,\ 1\leq t\leq \abs{uv^b},\\
    \hspace{0.5cm} V(y^w_{i,k,q})=1 \text{, and }V(y^w_{k,j,p})=1
    \end{cases}\\
    &\iff     
    \begin{cases} 
    i=j; \text{ or}\\ 
    \exists t\in \mathbb N,\ 1\leq t\leq \abs{uv^b},\\
    \hspace{0.5cm} uv^b[i,t)\matches \rho^V_\ell \text{, and }uv^b[t,j)\matches (\rho^V_k)^*
    \end{cases}\\
    &\iff uv^b[i,j)\matches (\rho^V_\ell)^*
    \end{aligned} 
    \end{multline*}
	\item In the case $\varphi^V_k=\neg\varphi^V_\ell$, we have $V(x_{k,\neg})$ and $V(l_{k,\ell})$ set to 1, and thus, we make the following deductions:
    \begin{align*}
    V(y^{u,v}_{i,k})=1 &\iff V(y^{u,v}_{i,\ell})=0 \\
    &\iff uv^\omega[i,\infty)\nvDash \varphi^V_\ell\\
    &\iff uv^\omega[i,\infty)\modelspsl \neg\varphi^V_\ell
    \end{align*}
	\item In the case $\varphi^V_k=\varphi^V_\ell \lor \varphi^V_{\ell^\prime}$, we have $V(x_{k,\lor})$, $V(l_{k,\ell})$, and $V(r_{k,\ell^\prime})$ all set to 1, and thus, we make the following deduction:
    \begin{multline*}
    V(y^{u,v}_{i,k})=1\\
    \begin{aligned}
    &\iff V(y^{u,v}_{i,\ell})=1 \text{ or } V(y^{u,v}_{i,\ell^{\prime}})=1 \\
    &\iff uv^\omega[i,\infty)\modelspsl \varphi^V_\ell\text{ or } uv^\omega[i,\infty)\modelspsl \varphi^V_{\ell^\prime}\\
    &\iff uv^\omega[i,\infty)\modelspsl \varphi^V_\ell\lor \varphi^V_{\ell^\prime}
    \end{aligned}
    \end{multline*}
    \item In the case $\varphi^V_k=\ltlnext \varphi^V_\ell$, we have $V(x_{k,\ltlnext})$ and $V(l_{k,\ell})$ set to 1 and thus, we make the following deductions:
    \begin{multline*}
    V(y^{u,v}_{i,k})=1 \\
    \begin{aligned}    
    &\iff 
    \begin{cases}
    V(y^{u,v}_{i+1,\ell})=1\ \text{for}\ 0\leq i < \abs{uv}-1 \\
    V(y^{u,v}_{\abs{u}, \ell})=1\ \text{for}\ i= \abs{uv} -1    
    \end{cases}\\
    &\iff
    \begin{cases}
    uv^\omega[i+1, \infty)\modelspsl \varphi^V_\ell\ \text{for}\ 0\leq i < \abs{uv}-1\\
    uv^\omega[\abs{u}, \infty)\modelspsl \varphi^V_\ell\ \text{for}\ i=\abs{uv}-1
    \end{cases}\\
    &\iff uv^\omega[i,\infty)\modelspsl \ltlnext\varphi^V_\ell
    \end{aligned}
    \end{multline*}
    \item In the case $\varphi^V_k=\varphi^V_\ell \ltluntil \varphi^V_{\ell^\prime}$, we have $V(x_{k,\ltluntil})$, $V(l_{k,\ell})$, and $V(r_{k,\ell^\prime})$ all set to 1, and thus, we make the following deductions:
    \begin{multline*}
    V(y^{u,v}_{i,k})=1\\
    \begin{aligned}
    &\iff
    \begin{cases}
    \exists j,\ i\leq j\leq \abs{uv}-1,\ V(y^{u,v}_{j,\ell^\prime})=1, \text{and}\\
    \forall t, i\leq t < j,\ V(y^{u,v}_{t,\ell})=1\
    \text{for}\ i< \abs{u} \\[0.2cm]
     \exists j,\ \abs{u}\leq j< \abs{uv},\ V(y^{u,v}_{j,\ell^\prime})=1,\ \text{and}\\
     \forall t\in\mathcal{I}^{u,v}(i,j),\ V(y^{u,v}_{t,\ell})=1\ \text{for}\ i\geq \abs{u}\
    \end{cases}\\
    &\iff
    \begin{cases}
    \exists j,\ i\leq j\leq \abs{uv}-1,\ uv^\omega[j,\infty)\modelspsl \varphi^{V}_{\ell^\prime},\ \text{and}\\ \forall t, i\leq t < j,\ uv^\omega[t,\infty)\modelspsl \varphi^{V}_\ell\ \text{for}\ i< \abs{u}\\[0.2cm]
     \exists j,\ \abs{u}\leq j< \abs{uv},\ uv^\omega[j,\infty)\modelspsl \varphi^{v}_{q^{\prime}},\ \text{and}\\
     \forall t\in\mathcal{I}_{u,v}(i,j),\ uv^\omega[t,\infty)\modelspsl \varphi^{V}_\ell\ \text{for}\ i\geq \abs{u}\
    \end{cases}\\
    &\iff uv^\omega[i,j)\modelspsl \varphi^V_\ell\ltluntil \varphi^V_{\ell^\prime}
    \end{aligned}
    \end{multline*}
	\item In the case $\varphi^V_k=\rho^V_\ell \triggers \varphi^V_{\ell^{\prime}}$, we have $V(x_{k,\triggers})$, $V(l_{k,\ell})$, and $V(r_{k,\ell^\prime})$ set to 1, and thus, we make the following deductions:
	\begin{multline*}
	V(y^{u,v}_{i,k})=1\\
	\begin{aligned}
	&\iff \forall j\in{\mathbb{N}},\ i\leq j\leq \abs{uv^b},\\
	&\hspace{0.5cm}\text{if }V(z^{u,v}_{i,j,\ell})=1\text{ then } V(y^{u,v}_{M_{u,v}(j-1),\ell^{\prime}})=1\\
	&\iff \forall j\in{\mathbb{N}},\ i\leq j\leq \abs{u}+b\abs{v},\\
	&\hspace{0.5cm}\text{if }uv^b[i,j)\matches \rho^V_\ell\text{ then } uv^\omega[M_{u,v}(j-1),\ell^\prime)\modelspsl \varphi_\ell^{\prime}\\
	&\iff \forall j\in{\mathbb{N}},\ i\leq j\leq \abs{u}+b\abs{v},\\
	&\hspace{0.5cm}\text{if }uv^b[i,j)\matches \rho^V_\ell\text{ then } uv^\omega[j-1,\ell^\prime)\modelspsl \varphi_\ell^{\prime}\\
	&\iff uv^\omega[i,\infty)\modelspsl \rho^V_\ell\triggers\varphi^V_{\ell^{\prime}}
	\end{aligned}
	\end{multline*}   
\end{itemize}

\end{proof}

\end{document}